\mathchardef\mhyphen="2D
\newtheorem{asm}{Assumption}[section]
\newtheorem{defn}{Definition}[section]
\newcommand{\ind}{\perp\!\!\!\!\perp} 
\definecolor{lightmintbg}{rgb}{.69,.86,.99}
\title{Detecting and Measuring Confounding Using Causal Mechanism Shifts}
\author{%
  Abbavaram Gowtham Reddy and Vineeth N Balasubramanian \\
  Indian Institute of Technology Hyderabad, India \\
  \texttt{\{cs19resch11002,vineethnb\}@iith.ac.in} \\
  % examples of more authors
  % \AND
  % Coauthor \\
  % Affiliation \\
  % Address \\
  % \texttt{email} \\
  % \And
  % Coauthor \\
  % Affiliation \\
  % Address \\
  % \texttt{email} \\
  % \And
  % Coauthor \\
  % Affiliation \\
  % Address \\
  % \texttt{email} \\
}
\begin{document}

\maketitle

\begin{abstract}
Detecting and measuring confounding effects from data is a key challenge in causal inference. Existing methods frequently assume causal sufficiency, disregarding the presence of unobserved confounding variables. Causal sufficiency is both unrealistic and empirically untestable. Additionally, existing methods make strong parametric assumptions about the underlying causal generative process to guarantee the identifiability of confounding variables. Relaxing the causal sufficiency and parametric assumptions and leveraging recent advancements in causal discovery and confounding analysis with non-i.i.d. data, we propose a comprehensive approach for detecting and measuring confounding. We consider various definitions of confounding and introduce tailored methodologies to achieve three objectives: (i) detecting and measuring confounding among a set of variables, (ii) separating observed and unobserved confounding effects, and (iii) understanding the relative strengths of confounding bias between different sets of variables. We present useful properties of a confounding measure and present measures that satisfy those properties. Empirical results support the theoretical analysis.
\end{abstract}

\vspace{-9pt}
\section{Introduction}
\label{sec introduction}
\vspace{-7pt}

Understanding the underlying causal generative process of a set of variables is crucial in many scientific studies for applications in treatment and policy designs~\citep{pearl2009causality}. While randomized controlled trials (RCTs) and causal inference through active interventions are ideal choices for understanding the underlying causal model~\citep{hauser2014two,eberhardt2007interventions,eberhardt2012number,shanmugam2015learning}, RCTs and/or active interventions are often impossible/infeasible, and some times unethical~\citep{rct_cost,carey2016some}. Research efforts in causal inference hence rely on observational data to study causal relationships~\cite{pearl2009causality,spirtes2016causal,zanga2022survey,hammerton2021causal,nichols2007causal}. However, recovering the underlying causal model purely from observational data is challenging without further assumptions; this challenge is further exacerbated in the presence of unmeasured confounding variables.

A confounding variable is a variable that \textit{causes} two other variables, resulting in a spurious association between those two variables. As exemplified with \textit{Simpson's paradox}~\citep{simpson1951interpretation} and many other studies~\citep{hoyer2008estimation,aldrich1995correlations,ksir2016correlation}, the presence of confounding variables is an important quantitative explanation for why \textit{correlation does not imply causation}. It is challenging to observe and measure all confounding variables in a scientific study~\citep{spirtes2000causation,pearl2009causality}. Identifying \textit{latent} or \textit{unobserved} confounding variables is even more challenging, and misinterpretation presents various challenges in downstream applications, such as discovering causal structures from observational data.  Numerous methods operate under the assumption of \textit{causal sufficiency}~\citep{perry2022causal,differentiable,spirtes2000causation,chickering2002learning,scanagatta2015learning,zanga2022survey}, implying the non-existence of unobserved confounding variables. Causal sufficiency presupposes that all pertinent variables required for causal inference have been observed. However, this may not be a practical or testable assumption.

The study of confounding has various applications, chief among them being causal discovery - identifying the causal relationships among variables~\citep{sarahdetecting,mooij2020joint,cdunobserved}. It is also useful for determining whether a set of observed confounding variables is sufficient to adjust for estimating causal effects~\citep{karlsson2024detecting}, measuring the extent to which statistical correlation between variables can be attributed to confounding~\citep{cnfmeasure,JanzingSchölkopf+2018,vanderweele2013definition}, and verifying the comparability of treatment and control groups in non-randomized interventional studies~\citep{GROENWOLD200922}.

A fundamental problem in causal inference tasks lies in detecting hidden confounding variables from observational data alone. However, this is non-trivial and poses various challenges. For example, a key issue is that given a marginal distribution over observed variables, there are infinitely many joint distributions corresponding to causal graphs involving unobserved variables~\citep{shpitser2014introduction}. To tackle such challenges, recent endeavors show that using data from different environments helps in improved causal discovery~\citep{mooij2020joint,sarahdetecting,li2023causal,perry2022causal,jaber2020causal}, detecting causal mechanism shifts~\citep{mameche2022discovering}, and detecting unobserved confounding~\citep{karlsson2024detecting,sarahdetecting}. However, such recent efforts often subsume confounding detection under causal discovery, focusing primarily on identifying confounding factors while overlooking other useful information, such as the relative strength of confounding between variable sets and the distinction between observed and unobserved confounding within a variable set. We seek to address these gaps in this work.

We focus exclusively on the problem of studying confounding from multiple perspectives, including (i) detecting and measuring confounding among a set of variables, (ii) assessing the relative strengths of confounding among different sets of observed variables, and (iii) distinguishing between observed and unobserved confounding among a set of variables. \textit{The primary focus of causal inference often lies in verifying the presence or absence of confounding rather than determining the exact value of the measured confounding. However, we leverage the measured confounding to assess the relative strengths of confounding between sets of variables. 
To achieve the above objectives, we utilize data from various contexts, where each context results from shifts in the causal mechanisms of a set of variables~\citep{sarahdetecting,perry2022causal}.} This allows us to propose different measures of confounding based on the available context information. Our contributions can be summarized as follows.
\begin{itemize}[leftmargin=*]
\setlength \itemsep{0.0em}
    \item For various definitions of confounding, we propose corresponding measures of confounding and present useful properties of the proposed measures. To our knowledge, this is the first comprehensive study that examines various aspects of observed and unobserved confounding using data from multiple contexts without making parametric or causal sufficiency assumptions.
    \item We study pair-wise confounding, confounding among multiple variables, how to separate unobserved confounding from overall confounding, and present ways to assess relative confounding.
    \item We present an algorithm for detecting and measuring confounding using data from multiple contexts. Experimental results are performed to verify theoretical analysis.
\end{itemize}

\vspace{-7pt}
\section{Related Work}
\vspace{-6pt}
The study of confounding has typically been embedded as part of causal discovery algorithms in most existing work. Causal discovery methods can be categorized according to several criteria, including the type of data utilized (observational versus interventional/experimental), parametric versus non-parametric approaches, or whether they relax causal sufficiency assumptions~\citep{zanga2022survey,spirtes2016causal}. Considering our focus in this work on studying confounding comprehensively by going beyond observed confounding variables, we discuss literature that are directed towards methods that relax the causal sufficiency assumption and rely on experimental data.
            
\noindent \textbf{Causal Discovery via Observational Data, Relaxing Causal Sufficiency:} Constraint based causal discovery algorithms produce equivalence class of graphs that satisfy a set of conditional independence constraints~\citep{spirtes2000causation, diepen2023beyond, colombo2012learning,ogarrio16}. Other methods such as~\citep{bhattacharya2021differentiable,kaltenpoth23alinear,kaltenpoth2023nonlinear} reduce the problem complexity by assuming a parametric form of the underlying causal model (e.g., variables are jointly Gaussian in \citet{chandrasekaran2010latent}), thereby returning unique causal graphs. %For example,  find confounding when . 
Nested Markov Models (NMMs)~\citep{shpitser2014introduction,shpitser2018acyclic,richardson2023nested,evans2019smooth} allow identifiability of causal models with latent factors by using (pairwise) Verma constraints. A recent approach using differentiable causal discovery~\citep{bhattacharya2021differentiable} combines NMMs with the differentiable constraint~\citep{zheng2018dags} to discover a partially directed causal network and likely confounded nodes. Unlike these methods, our focus in this work is on detecting and measuring confounding under various settings, instead of recovering the entire causal graph or equivalence class.

\noindent\textbf{Causal Discovery Using Data From Multiple Environments:}
Given access to a set of observed confounding variables, very recent work~\citep{karlsson2024detecting} presented testable conditional independence tests that are violated only when there is unobserved confounding. However, their analysis is focused towards the downstream causal effect estimation. We aim to provide a unified framework for studying and measuring confounding under different types of contextual information available. Other methods~\citep{li2023causal,jaber2020causal} learn an equivalence class of graphs when data from observational and interventional distribution are available. Confounding has also shown to be detected in linear models with non-Gaussian variables~\citep{hoyer2008estimation}. In linear models, a spectral analysis method was proposed in~\citep{JanzingSchölkopf+2018} to understand to what extent the statistical correlation between a set of variables on a target variable can be attributed to confounding. See Tab. 4 of~\citep{mooij2020joint} for an overview of causal discovery methods that use data from multiple environments or contexts. Under the specific assumptions of causal sufficiency and sparse mechanism shift, a method was proposed in~\citep{perry2022causal} to reduce the size of a given Markov equivalence class using mechanism shift score. A differentiable causal discovery method was proposed in~\citep{differentiable} to use interventional data to recover interventional Markov equivalence class. 
While these methods use data from different contexts, they assume the absence of unobserved confounding variables; we instead focus on capturing both observed and unobserved confounding.

\noindent \textbf{Measuring and Interpreting Confounding:} Earlier efforts in the field have studied different measures for observed confounding, each tailored to address specific challenges~\citep{pearl2009causality,greenland2001confounding,maldonado2002estimating,breslow1980statistical,miettinen1981confounding,kleinbaum2007pocket,pang2016studying,maldonado1993simulation}. Such measures have also been refined to address specific issues~\citep{cnfmeasure, schuster2021noncollapsibility}; for e.g., a method to correct the non-linearity effect present in confounding estimates via the exposure–outcome association with
and without adjustment for confounding was proposed in~\citep{cnfmeasure}. In contrast, we measure the effects of both observed and \textit{unobserved} confounding. Motivated from the \textit{ignorability} property in potential outcomes framework~\citep{tan2006distributional,jesson2022scalable}, the divergence between nominal and complete propensity density has been considered as an indicative of hidden confounding~\citep{jesson2022scalable}.
To the best of our knowledge, the efforts closest to ours are~\citep{sarahdetecting, mooij2020joint}, which study confounding using data from multiple contexts without the causal sufficiency assumption. However, they \textit{do not measure confounding} and detect confounding only as a step to discover the causal graph. Ours is a more general framework for studying and measuring confounding from multiple perspectives.

In regression models, certain difference thresold between the coefficients of treatment variable before and after adjusting for the possible confounding is considered as the indication for the presence of confounding. This process of choosing a threshold is also called \textit{change-in-estimate} criterion. Typical threshold used in literature is $10\%$~\cite{schuster2021noncollapsibility,lee2014cutoff,esben}.

\vspace{-7pt}
\section{Background and Problem Setup}
\vspace{-7pt}

Let $\mathbf{X}$ be a set of observed variables and $\mathbf{Z}$ be a set of unobserved or latent variables. The values of $\mathbf{X, Z}$ can be real, discrete, or mixed. Let $\mathcal{G}$ be the underlying directed acyclic graph (DAG) among the variables $\mathbf{V}=\mathbf{X}\cup \mathbf{Z}$. Directed edges among the variables in $\mathbf{V}$ indicate direct causal influences. Assume that the set of unobserved variables $\mathbf{Z}$ are jointly independent and are exogenous to $\mathbf{X}$ (i.e., $Z_i\ind Z_j \text{ and } X_k\not \rightarrow Z_j\ \ \forall i,j,k$). In this setting, any two nodes $X_i, X_j \in \mathbf{X}$ sharing a common parent $Z_k\in\mathbf{Z}$ are said to be confounded, and $Z_k$ is said to be a confounding variable. For a node $X_i\in \mathbf{X}, \mathbf{PA}_i=\{X_j\in\mathbf{X}|X_j\rightarrow X_i\} \cup \{Z_j\in\mathbf{Z}|Z_j\rightarrow X_i\}$ denotes the set of parents of $X_i$.

% \begin{defn}
% \label{cbn}
%     \textbf{(Causal Bayesian Network~\citep{pearl2009causality})} Let $\mathbb{P}(\mathbf{V})$ be a probability distribution over a set of variables $\mathbf{V}$ and let $\mathbb{P}_{\mathbf{x}}(\mathbf{V})$ be the interventional distribution resulting from the hard intervention $do(\mathbf{X}=\mathbf{x})$ that sets the values of $\mathbf{X}\subseteq \mathbf{V}$ to constants $\mathbf{x}$. Denote by $\mathbb{P}_*$ the set of interventional distributions $\mathbb{P}_{\mathbf{x}}(\mathbf{V}), \mathbf{X}\subseteq\mathbf{V}$. $\mathbb{P}_*$ includes $\mathbb{P}(\mathbf{V})$ which represents no intervention (i.e., $\mathbf{X}=\emptyset$). A DAG $\mathcal{G}$ is said to be a causal Bayesian network compatible with $\mathbb{P}_*$ if and only if for all $\mathbf{X}\subseteq \mathbf{V}$, $\mathbb{P}_{\mathbf{x}}(\mathbf{V}=\mathbf{v}) = \prod_{\{i|V_i\not \in \mathbf{X}\}} \mathbb{P}(V_i=v_i|\mathbf{PA}_i)$ for all $\mathbf{v}$ consistent with $\mathbf{x}$.
% \end{defn}

For a node $X_i$, $\mathbb{P}(X_i|\mathbf{PA}_i)$ is called the \textit{causal mechanism} of $X_i$. The causal mechanism $\mathbb{P}(X_i|\mathbf{PA}_i)$ encodes how the variable $X_i$ is influenced by its parents $\mathbf{PA}_i$. Following earlier work~\citep{huang2020causal,sarahdetecting,perry2022causal,huang2017behind,peters2014causal,scholkopf2012causal}, we make the following general assumption about the underlying causal mechanisms of data.

\begin{asm}
\label{asm icm}
\textbf{(Independent Causal Mechanisms~\citep{pearl2009causality,elementsofcausalinference})} A change in $\mathbb{P}(X_i|\mathbf{PA}_i)$ has no effect on and provides no information about $\mathbb{P}(X_j|\mathbf{PA}_j)\ \forall j\neq i$.
\end{asm}

Identifying confounding from only observational data is challenging without further assumptions~\citep{kaltenpoth23alinear}. Hence, following earlier work~\citep{sarahdetecting,karlsson2024detecting,mooij2020joint}, we assume that the data over the variables $\mathbf{X}$ is observed over multiple \textit{contexts or environments}. While there are various ways of formulating/constructing contexts, in this paper, we assume that each context is created as a result of either \textit{hard (a.k.a. structural)} interventions or \textit{soft (a.k.a. parametric)} interventions on a subset $\mathbf{V}_S\subseteq \mathbf{V}$ of variables where $S$ is a set of indices. Performing hard intervention on a variable $V_i$ is the same as setting the value of $V_i$ to a value $v_i$. Hard intervention on a variable $V_i$ removes the influence of its parents $\mathbf{PA}_i$ on $V_i$. Performing soft intervention on a variable $V_i$ is the same as changing the causal mechanism of $V_i$, $\mathbb{P}(V_i|\mathbf{PA}_i)$, with a new causal mechanism $\tilde{\mathbb{P}}(V_i|\mathbf{PA}_i)$. Soft intervention on a variable $V_i$ does not remove the influence of its parents $\mathbf{PA}_i$ on $V_i$. The idea of explicitly considering context information and using different contexts as context variables to create extended causal graphs has been studied in the literature. Context variables are also called as \textit{policy variables, decision variables, regime variables, domain variables, environment variables, etc.}~\citep{mooij2020joint,perry2022causal,guo2023causal,huang2020causal}. 

Let $\mathbf{C}=\{c_1,c_2,\dots,c_n\}$ be the set of $n$ contexts and let $\mathbb{P}^c(\mathbf{X}), c\in \mathbf{C}$, denotes the probability distribution of the observed variables $\mathbf{X}$ in the context $c$. Let $\mathbf{C}_{S\wedge R}$, where $S, R$ are sets of indices, be the set of contexts in which we observe mechanism changes for the set of variables $\mathbf{X}_{S\cup R}$. Similarly, let $\mathbf{C}_{S\wedge \neg R}$ be the set of contexts in which we observe mechanism changes for the set of variables $\mathbf{X}_S$ but not for the variables $\mathbf{X}_R$. We say that the causal mechanism of a variable $X_i$ changes between two contexts $c,c'$ if $\mathbb{P}^c(X_i|\mathbf{PA}_i)\neq\mathbb{P}^{c'}(X_i|\mathbf{PA}_i)$. Given the data over observed variables in each context, there exist methods for detecting mechanism shifts of each variable between the contexts~\citep{mameche2022discovering,sarahdetecting,perry2022causal,mameche2024learning}. For example, the $p\text{-value}(\mathbb{P}^c(X_i|\mathbf{PA}_i^o)\neq\mathbb{P}^{c'}(X_i|\mathbf{PA}_i^o))$ where $\mathbf{PA}_i^o$ is the set of observed parents of $X_i$ can be used to detect mechanism change for $X_i$ between the contexts $c, c'$~\citep{sarahdetecting,mameche2022discovering}. Hence, we focus on detecting and measuring confounding among a set of variables, assuming that the causal mechanism shifts are observed among that set of variables.

Context information is not very useful if there is no restriction on how causal mechanisms are changed between the contexts~\citep{perry2022causal, sarahdetecting}. For example, the causal mechanisms of $X_i$ and $X_j$ both
differing across all (or no) contexts would trivially
satisfy Assumption~\ref{asm icm}, but reveal no information about
the underlying causal mechanisms~\citep{david2010impossibility,sarahdetecting}. Hence, following earlier work~\citep{perry2022causal,sarahdetecting,guo2023causal}, we make the following assumptions. 
\begin{asm}
\label{asm sparsecm}
\textbf{(Sparse Causal Mechanism Shift~\citep{scholkopf2021toward})} Causal mechanisms of variables change sparsely across contexts, i.e., if $p \coloneqq (\mathbb{P}^c(X_i|\mathbf{PA}_i)\neq \mathbb{P}^{c'}(X_i|\mathbf{PA}_i))$, then $0<p<0.5;\ \ \ \forall c,c'\in \mathbf{C}$.
\end{asm}
Assumption~\ref{asm sparsecm} implies that the causal mechanisms change infrequently across contexts. This assumption is more general because, in many scientific studies, for any given context, interventions typically affect only a few variables~\citep{scholkopf2021toward}.
\begin{asm}
    \label{asm sharedcausalgraph}
    \textbf{(Markov Property under Mechanism Shifts~\citep{guo2023causal})} The distribution $\mathbb{P}(\mathbf{V})$ is given by $\mathbb{P}(\mathbf{V}) = \int\mathbb{P}^C(\mathbf{V})d\mathbb{P}(C)=\int \Pi_i \mathbb{P}^C(V_i|\mathbf{PA}_i)d\mathbb{P}(C)$. In other words, variables $\mathbf{V}$ are assumed to be conditionally exchangeable, so that the same graph $\mathcal{G}$ applies in every context $c \in \mathbf{C}$.
\end{asm}
\begin{asm}
\label{asm sufficiency}
    \textbf{(Causal Sufficiency Over $\mathbf{X}\cup\mathbf{Z}$)} All common parents of any pair of observed nodes belong to the set $\mathbf{X}\cup\mathbf{Z}$. In other words, all relevant variables for detecting confounding and the unobserved confounding variables are already present in $\mathbf{X}\cup\mathbf{Z}$.
\end{asm}
\noindent \textit{\textbf{Problem Statement:} Given data over the observed variables $\mathbf{X}$ in multiple contexts, each context resulting from a sparse causal mechanism shift of variables in $\mathbf{V}$, (i) can we identify which pairs or sets of variables are confounded and can we measure the confounding strength? (ii) can we isolate the confounding effects of observed and unobserved confounding variables? and (iii) can we study the relative strengths of confounding among different sets of variables?}

To address the above problem, in the next section, we consider various definitions of confounding and present appropriate confounding measures depending on the context information available.

\vspace{-5pt}
\section{Detecting and Measuring Confounding}
\label{sec measures}
\vspace{-5pt}

\begin{wraptable}[9]{r}{0.65\textwidth}
    \centering
    \footnotesize
    \vspace{-10pt}
\scalebox{0.77}{
    \begin{tabular}{c|cccc}
    \toprule
        \textbf{Settings} & \textbf{Confounding Definition} & \textbf{Required Context}& \textbf{Type of} \\
        &\textbf{Based On}&\textbf{Information}&\textbf{Intervention}&\\
         \midrule
         1& Directed Information~\citep{directed} \& &$\mathbf{C}_{\{i\} \wedge \neg P_{ij}}$&\multirow{2}{*}{Hard / Structural}\\     &Noncollapsibility~\citep{greenland2001confounding,pang2016studying,schuster2021noncollapsibility}&$\mathbf{C}_{\{j\} \wedge \neg P_{ji}}$&\\
         \midrule
         2 \& 3&Mutual Information&$\mathbf{C}_{\{i\}\wedge \{j\}}$&Soft / Parametric\\
         \bottomrule
    \end{tabular}
    }
    \caption{\footnotesize Summary of the various settings for detecting and measuring confounding between $X_i, X_j$. Here $P_{ij}$ is the set of node indices that belong to a path from $X_i$ to $X_j$ including $j$.}
    \label{tab summary}
\end{wraptable}
In this section, we present methods for detecting and measuring confounding for various scenarios in which shifts in causal mechanisms are observed. Considering any three observed variables $X_i, X_j, X_o \in \mathbf{X}$ and an unobserved confounding variable $Z\in \mathbf{Z}$, we present measures of confounding depending on the information about mechanism shifts of $X_i, X_j, X_o, Z$. Each of the following subsections includes: (i) a definition of confounding, (ii) a corresponding definition of the confounding measure, (iii) a method for isolating the unobserved confounding measure from the overall confounding, (iv) an extension of the confounding measure to more than two variables, and (v) key properties of the proposed confounding measures. See Tab.~\ref{tab summary} and Fig.~\ref{fig introduction} for an overview.
 
\subsection{Setting 1: Measuring Confounding Using Directed Information Between $X_i, X_j$.}
\label{subsec setting1}

In this setting, we use the fact that directed information does not vanish in the presence of a confounding variable~\citep{info_theoretic,directed}. To this end, we leverage the interventional effects of $X_i, X_j$ on each other to define a measure of confounding.
\begin{defn}
\label{defn directed_info}
\textbf{(Directed Information~\citep{directed}).} The directed information $I(X_i\rightarrow X_j)$ from $X_i\in\mathbf{X}$ to $X_j\in\mathbf{X}$ is defined as the conditional Kullback-Leibler divergence between the distributions $\mathbb{P}(X_i|X_j), \mathbb{P}(X_i|do(X_j))$. That is:
\vspace{-3pt}
\begin{equation}
 I(X_i\rightarrow X_j)\coloneqq D_{KL}(\mathbb{P}(X_i|X_j)|| \mathbb{P}(X_i|do(X_j)))
 \coloneqq \mathbb{E}_{\mathbb{P}(X_i,X_j)} \log \frac{\mathbb{P}(X_i|X_j)}{\mathbb{P}(X_i|do(X_j))}
\end{equation}
\end{defn}
\vspace{-3pt}

\begin{defn}
\label{defn noconfounding}
\textbf{(No Confounding~\citep{pearl2009causality})} When measuring the causal effect of a (treatment) variable $X_i$ on a (target) variable $X_j$, the ordered pair $(X_i,X_j)$ is unconfounded if and only if the directed information from $X_j$ to $X_i$: $I(X_j\rightarrow X_i)$ is zero. Equivalently, $\mathbb{P}(X_j|X_i)=\mathbb{P}(X_j|do(X_i))$.
\end{defn} 

A similar definition of confounding that relates the conditional distribution $\mathbb{P}(X_i|X_j)$ and interventional distribution $\mathbb{P}(X_i|do(X_j))$ is defined as follows.

\begin{defn}
\label{def noncollapsibility}
\textbf{(Noncollapsibility)~\citep{greenland2001confounding,pang2016studying,schuster2021noncollapsibility}} The statistical association between two variables $X_i$ and $X_j$ is said to be noncollapsible if the association strength differs in each level/strata of other variable $X_k$. That is, if $X_k$ is a confounding variable between $X_i,X_j$, we have $\mathbb{P}(X_j|X_i) \neq \mathbb{P}(X_j|do(X_i))=\mathbb{E}_{X_k}(\mathbb{P}(X_j|X_i,X_k))$.
\end{defn}

From Defns.~\ref{defn directed_info} and~\ref{defn noconfounding}, for a pair of variables $(X_i,X_j)$, observing $I(X_j\rightarrow X_i)>0$ and $I(X_i\rightarrow X_j)>0$ implies that $\mathbb{P}(X_j|do(X_i))\neq \mathbb{P}(X_j|X_i)$ and hence the presence of confounding (see Tab.~\ref{tab:directedinfo}). Using the above properties of directed information, we measure \textit{confounding} as follows.
\begin{defn}
\label{def confounding1}
\textbf{(Confounding Measure 1)}
When causal mechanism shifts of two variables $X_i, X_j \in \mathbf{X}$ are observed, resulting in different contexts, under the Assumptions~\ref{asm sparsecm}-\ref{asm sufficiency}, the measure of confounding $CNF\mhyphen1(X_i,X_j)$ between $X_i$ and $X_j$ is defined as follows.
\begin{equation}
\label{eq: cnf1}
    CNF\mhyphen1(X_i,X_j) := 1-e^{-\min(I(X_i\rightarrow X_j), I(X_j\rightarrow X_i))}
\end{equation} 
\end{defn}
\begin{figure}
    \centering
    \includegraphics[width=0.8\textwidth]{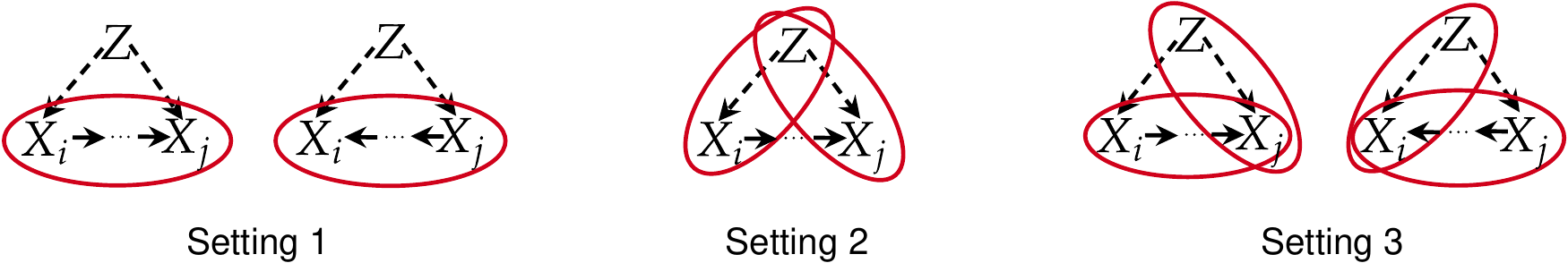}
    \caption{\footnotesize \textbf{Setting 1:} When contexts $\mathbf{C}_{\{i\} \wedge \neg P_{ij}}$ and $\mathbf{C}_{\{j\} \wedge \neg P_{ji}}$ are known where $P_{ij}$ is the set of node indices that belong to a path from $X_i$ to $X_j$ including $j$, we leverage directed information from $X_i$ to $X_j$ and from $X_j$ to $X_i$ to define a measure of confounding (Defn.~\ref{def confounding1}). \textbf{Setting 2:} Causal mechanism changes in $Z$ introduces dependencies on the observed distributions of $X_i, X_j$. We leverage such dependencies to measure confounding when contexts $\mathbf{C}_{\{i\}\wedge \{j\}}$ are known (Defn.~\ref{def confounding2}). \textbf{Setting 3:} If we know that there is a causal path from $X_i$ to $X_j$, we leverage dependencies between the pairs $(X_i, X_j)$ and $(Z, X_j)$ to measure confounding. Similarly, if we know that there is a causal path from $X_j$ to $X_i$, we leverage dependencies between the pairs $(X_i, X_j)$ and $(Z, X_i)$ to measure confounding (Defn.~\ref{def confounding3}). Dashed arrows from $Z$ indicate that $Z$ is unobserved.}
    \label{fig introduction}
    \vspace{-10pt}
\end{figure}

\begin{wraptable}[12]{r}{0.5\textwidth}
    \centering
    \vspace{-5pt}
    \scalebox{0.8}{
    \begin{tabular}{l|l|c|c}
    \toprule
    &\textbf{Graph}&$I(X_i\rightarrow X_j)$&$I(X_j\rightarrow X_i)$\\
    \midrule
     & $X_i\rightarrow X_j$  & \large $>0$ & \large $=0$ \\
      \cmidrule{2-4}
    \parbox[t]{2mm}{\multirow{-2.5}{*}{\rotatebox[origin=c]{90}{Uncnf.}}}&$X_j\rightarrow X_i$  & \large $=0$ &\large $>0$ \\
    \midrule
     & $X_i\rightarrow X_j$  & \large \multirow{2}{*}{$>0$ }&\large \multirow{2}{*}{$>0$ } \\
      &$Z\rightarrow X_i, Z \rightarrow X_j$&&\\
      \cmidrule{2-4}
    &$X_j\rightarrow X_i$  &\large \multirow{2}{*}{$>0$ } &\large \multirow{2}{*}{$>0$ } \\
      \parbox[t]{2mm}{\multirow{-4.5}{*}{\rotatebox[origin=c]{90}{Confounded}}}&$Z\rightarrow X_i, Z \rightarrow X_j$&&\\
    \bottomrule
    \end{tabular}
    }
    \caption{\footnotesize Directed information values in two and three node graphs. If $X_i,X_j$ are confounded by $Z$, we observe positive directed information from both directions.}
    \label{tab:directedinfo}
\end{wraptable}
For all the confounding measures, we use exponential transformation to limit the range of the measure between $0$ and $1$. Note that in a DAG, one of $I(X_i\rightarrow X_j), I(X_j\rightarrow X_i)$ is zero under no confounding (see Tab.~\ref{tab:directedinfo} for a simple example with two and three node graphs). Hence $CNF\mhyphen1(X_i,X_j)$ outputs zero when there is no confounding between $X_i,X_j$. Similarly $CNF\mhyphen1(X_i,X_j)$ outputs positive real value in the range $(0,1]$ when there is confounding. We leverage data from multiple contexts to evaluate $\mathbb{P}(X_i|X_j)$ and $\mathbb{P}(X_i|do(X_j))$ as follows. In this setting, we assume each context is generated as a result of \textit{hard} interventions on a subset of variables. Let $P_{ij}$ be the set of node indices that belong to a path from $X_i$ to $X_j$ including $j$, we use the contexts $\mathbf{C}_{\{i\} \wedge \neg P_{ij}}$ to evaluate $\mathbb{P}(X_j|do(X_i))$ as $\mathbb{P}(X_j|do(X_i)) = \mathbb{E}_{c\in\mathbf{C}_{\{i\} \wedge \neg P_{ij}}}[\mathbb{P}^c(X_j|X_i)]$.  Intuitively, to compute the interventional effects of $X_i$ on $X_j$, we need to observe mechanism changes only for $X_i$ to account for the potential causal influence from $X_i$ to $X_j$. In addition, none of the nodes in a causal path from $X_i$ to $X_j$ should be intervened. We use observational data to evaluate $\mathbb{P}(X_j|X_i)$.

\begin{restatable}[]{propn}{propone}
\label{prop1}
\textbf{(Identifiability of $\mathbb{P}(X_j|do(X_i))$)} $\mathbb{P}(X_j|do(X_i))$ is identifiable from the set of contexts $\mathbf{C}_{\{i\} \wedge \neg P_{ij}}$. To detect and measure confounding between a pair of nodes $X_i,X_j$, it is enough to observe two sets of contexts $\mathbf{C}_{\{i\} \wedge \neg P_{ij}}$ and $\mathbf{C}_{\{j\} \wedge \neg P_{ji}}$. Thus, $n$ sets of contexts are needed to detect and measure confounding between $\binom{n}{2}$ distinct pairs of nodes in a causal DAG with $n$ nodes.
\end{restatable}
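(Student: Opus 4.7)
The plan is to first establish the pointwise identity $\mathbb{P}^c(X_j\mid X_i)=\mathbb{P}(X_j\mid do(X_i))$ for every context $c\in\mathbf{C}_{\{i\}\wedge\neg P_{ij}}$; once this holds, the announced expectation formula is immediate, and the remaining two assertions reduce to a symmetry argument and a counting argument respectively.

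To prove the pointwise identity, I would fix $c\in\mathbf{C}_{\{i\}\wedge\neg P_{ij}}$ and use the definition of the context set together with Assumption~\ref{asm sharedcausalgraph} to write the truncated factorization for a hard intervention on $X_i$,
\[
\mathbb{P}^c(\mathbf{X},\mathbf{Z})=\mathbb{P}^c(X_i)\prod_{k\neq i}\mathbb{P}^c(X_k\mid\mathbf{PA}_k)\prod_\ell\mathbb{P}(Z_\ell),
\]
which severs every edge entering $X_i$ and so makes $X_i$ independent of $\mathbf{Z}$ in context $c$. I would then invoke Assumption~\ref{asm icm} together with the $\neg P_{ij}$ condition to replace each conditional $\mathbb{P}^c(X_k\mid\mathbf{PA}_k)$ along the directed paths from $X_i$ to $X_j$ by its observational counterpart, yielding
\[
\mathbb{P}^c(X_j\mid X_i=x_i)=\int\mathbb{P}(X_j\mid X_i=x_i,\mathbf{Z}=z)\,\mathbb{P}(\mathbf{Z}=z)\,dz=\mathbb{P}(X_j\mid do(X_i=x_i)),
\]
where the last equality is the standard backdoor adjustment over $\mathbf{Z}$, which is admissible by Assumption~\ref{asm sufficiency}. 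Because this identity holds pointwise across the contexts in $\mathbf{C}_{\{i\}\wedge\neg P_{ij}}$, taking the expectation preserves the equality.

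Plugging the identified quantity into Defn.~\ref{defn directed_info} then yields $I(X_i\to X_j)$, and applying the same argument with $i,j$ swapped (using $\mathbf{C}_{\{j\}\wedge\neg P_{ji}}$) yields $I(X_j\to X_i)$; the two together identify $CNF\mhyphen1(X_i,X_j)$ through Defn.~\ref{def confounding1}, proving the second assertion. For the counting claim, I would argue that for each node $X_i$ it suffices to single out one subfamily $\mathbf{C}_i$ of contexts in which \emph{only} $X_i$'s mechanism changes, because any such $\mathbf{C}_i$ is contained in $\mathbf{C}_{\{i\}\wedge\neg P_{ij}}$ for every $j\neq i$ simultaneously (the constraint $\neg P_{ij}$ is vacuous when nothing outside $\{i\}$ is perturbed). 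Hence the $n$ families $\mathbf{C}_1,\ldots,\mathbf{C}_n$ supply both directions of directed information for all $\binom{n}{2}$ ordered pairs.

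The delicate step is the first one: the identity $\mathbb{P}^c(X_j\mid X_i)=\mathbb{P}(X_j\mid do(X_i))$ needs to be argued to hold even when variables outside $P_{ij}\cup\{i\}$ may have been perturbed in context $c$, since such perturbations could shift the marginal of an off-path confounder and leak into the adjustment. I expect to handle this by noting that once $X_i$'s incoming edges are severed by the hard intervention, any $c$-dependence introduced through off-path mechanisms is confined to non-causal routes that are already blocked; if a subtle dependence persists, falling back on the restricted subfamily used in the counting step---where only $X_i$ is perturbed---closes the gap without weakening the three stated claims.
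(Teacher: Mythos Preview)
Your approach is essentially the same as the paper's: establish the pointwise equality $\mathbb{P}^c(X_j\mid X_i)=\mathbb{P}(X_j\mid do(X_i))$ for $c\in\mathbf{C}_{\{i\}\wedge\neg P_{ij}}$, then use symmetry for the second claim and a counting argument for the third. The paper's proof is considerably terser than yours---it simply asserts that because context $c$ arises from a hard intervention setting $X_i=x_i$, the conditional $\mathbb{P}^c(X_j\mid X_i=x_i)$ coincides with $\mathbb{P}(X_j\mid do(X_i=x_i))$, without unpacking the truncated factorization or the backdoor-style marginalization over $\mathbf{Z}$ that you spell out.

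The ``delicate step'' you flag---that off-path mechanisms (including those of $\mathbf{Z}$ or observed confounders of $X_j$) are not constrained by $\neg P_{ij}$ and could in principle shift $\mathbb{P}^c(X_j\mid X_i)$ away from the target interventional distribution---is a genuine subtlety that the paper's proof does not address either; it implicitly reads $\mathbf{C}_{\{i\}\wedge\neg P_{ij}}$ as contexts where \emph{only} $X_i$ is hard-intervened. Your fallback of restricting to the subfamily where only $X_i$'s mechanism changes is exactly how the counting claim is made to work (one context family per node, reusable across all $j$), and it also cleanly resolves the identifiability step, so your argument is at least as complete as the paper's.
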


When a confounding variable $X_o$ between $X_i,X_j$ is observed, and there may exist an unobserved confounding variable $Z$, it is crucial to detect and measure unobserved confounding effect~\citep{karlsson2024detecting}. We utilize conditional directed information to define the measure of unobserved confounding. 

\begin{defn}
\label{defn conditional_directed_info}
\textbf{(Conditional Directed Information~\citep{directed}).} The conditional directed information $I(X_i\rightarrow X_j|X_o)$ from $X_i$ to $X_j$ conditioned on $X_o$ is defined as the conditional Kullback-Leibler divergence between the distributions $\mathbb{P}(X_i|X_j, X_o), \mathbb{P}(X_i|do(X_j), X_o)$ as follows.
\vspace{-3pt}
\footnotesize{
\begin{equation}
 I(X_i\rightarrow X_j|X_o)\coloneqq D_{KL}(\mathbb{P}(X_i|X_j,X_o)|| \mathbb{P}(X_i|do(X_j),X_o))
 \coloneqq \underset{\mathbb{P}(X_i,X_j,X_o)}{\mathbb{E}} \log \frac{\mathbb{P}(X_i|X_j,X_o)}{\mathbb{P}(X_i|do(X_j),X_o)}
\end{equation}
}
\end{defn}
This measure can trivially be extended to the case where there exist multiple observed and unobserved confounding variables. The expression $\mathbb{P}(X_i|do(X_j),X_o)$ means conditioning on $X_o$ in the interventional distribution $\mathbb{P}(X_i|do(X_j))$. Now, the conditional confounding can be measured as: 
\begin{equation}
\label{eq: cnf1_unobserved}
    CNF\mhyphen1(X_i,X_j|X_o) := 1-e^{-\min(I(X_i\rightarrow X_j|X_o), I(X_j\rightarrow X_i|X_o))}
\end{equation} 
Intuitively, by conditioning on an observed confounding variable $X_o$, we control the association between $X_i, X_j$ flowing via $X_o$ and measure the influence via the unobserved confounding variables.

\noindent \textbf{Beyond Pairwise Confounding:} We now study when a set $\mathbf{X}_S$ of variables where $|\mathbf{X}_S|>2$ are jointly confounded i.e., share a common confounding variable and how to measure the joint confounding among the variables $\mathbf{X}_S$. 

\begin{restatable}[]{therm}{theoremone}
    \label{thm1} A set of observed variables $\mathbf{X}_{S}$  are jointly unconfounded if and only if there exists three variables $X_i, X_j, X_k\in \mathbf{X}_S$ such that $I(X_i\rightarrow X_j|X_k)=I(\{X_i,X_k\}\rightarrow X_j)$. 
\end{restatable}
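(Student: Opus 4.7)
The plan is to reduce the claim to a chain-rule-like decomposition of directed information and then exploit non-negativity of KL divergence. First, I would unfold $I(\{X_i, X_k\} \to X_j)$ using Definition~\ref{defn directed_info} applied to the pair $\{X_i, X_k\}$:
\[
I(\{X_i, X_k\} \to X_j) = \mathbb{E}_{\mathbb{P}(X_i, X_k, X_j)} \log \frac{\mathbb{P}(X_i, X_k \mid X_j)}{\mathbb{P}(X_i, X_k \mid do(X_j))}.
\]
Factoring $\mathbb{P}(X_i, X_k \mid \cdot) = \mathbb{P}(X_i \mid X_k, \cdot)\,\mathbb{P}(X_k \mid \cdot)$ in both numerator and denominator and splitting the logarithm, the first piece collapses to $I(X_i \to X_j \mid X_k)$ by Definition~\ref{defn conditional_directed_info}, and the second, after marginalizing $X_i$ out, collapses to $I(X_k \to X_j)$ by Definition~\ref{defn directed_info}. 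This yields the clean identity
\[
I(\{X_i, X_k\} \to X_j) = I(X_i \to X_j \mid X_k) + I(X_k \to X_j).
\]

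Because both summands on the right are KL divergences and hence non-negative, the equality asserted by the theorem holds if and only if $I(X_k \to X_j) = 0$, which by Definition~\ref{defn noconfounding} is equivalent to the ordered pair $(X_j, X_k)$ being unconfounded. So the existential quantifier in the theorem is really asking for the existence of an unconfounded ordered pair inside $\mathbf{X}_S$, with any third element serving as the witness $X_i$.

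To tie this back to joint (un)confounding in the sense of Section~\ref{sec measures}, I would argue the two directions of the iff separately. For the ``only if'' direction (existence of such a triple implies $\mathbf{X}_S$ is jointly unconfounded), I would use the contrapositive: a common unobserved confounder $Z$ of every variable in $\mathbf{X}_S$ is, in particular, a common parent of $X_j$ and $X_k$, producing $I(X_k \to X_j) > 0$ for every pair in $\mathbf{X}_S$ and thereby ruling out the equality for every triple. For the ``if'' direction, I would show that if $\mathbf{X}_S$ has no single common confounder, then at least one ordered pair within $\mathbf{X}_S$ must satisfy $I(X_k \to X_j) = 0$, after which any third element supplies $X_i$ and the chain-rule identity above delivers the equality.

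The main obstacle is precisely this ``if'' direction: it is not automatic that ``no single $Z$ confounds all of $\mathbf{X}_S$'' implies ``some ordered pair is unconfounded in the directed-information sense'', because distinct hidden confounders could in principle render every pair dependent without any one of them covering the whole set. Closing this gap will require invoking the independent mechanism and sparse-shift assumptions (Assumptions~\ref{asm icm}--\ref{asm sparsecm}), together with the multi-context access used throughout Setting~1, so that each distinct confounder leaves a non-redundant mechanism-shift signature that cannot collectively mimic a single shared $Z$. I expect the bulk of the write-up to live in this reduction; once it is in hand, the conclusion follows mechanically from the chain-rule identity derived in the first step.
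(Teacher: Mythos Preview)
Your core approach is exactly the paper's: derive the chain-rule identity
\[
I(X_i\to X_j\mid X_k)=I(\{X_i,X_k\}\to X_j)-I(X_k\to X_j),
\]
and then use non-negativity of $I(X_k\to X_j)$ to conclude that equality holds precisely when $I(X_k\to X_j)=0$, i.e.\ when the pair $(X_j,X_k)$ is unconfounded in the sense of Definition~\ref{defn noconfounding}. The paper's proof in fact stops right there---it does not carry out the additional reduction from ``some pair in $\mathbf{X}_S$ is unconfounded'' to ``$\mathbf{X}_S$ is jointly unconfounded'' that you flag as the main obstacle---so your first two paragraphs already reproduce the paper's argument in full, and the extra work you outline for the ``if'' direction goes beyond what the paper supplies.
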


We now define the measure of confounding among the variables in $\mathbf{X}_S$ as follows.
\begin{equation}
    \label{eqn joint1}
    CNF\mhyphen1(\mathbf{X}_S) = \sum_{i\in S}CNF\mhyphen1(\mathbf{X}_{S\setminus \{i\}}, X_i)
\end{equation}
Conditional confounding among a set of variables can be defined similarly to Eqn.~\ref{eq: cnf1_unobserved}. We now study some useful properties of the measure $CNF\mhyphen1$.

\begin{restatable}[]{therm}{theoremtwo}
\label{thm2}
For any three observed variables $X_i,X_j,X_o$ and an unobserved confounding variable $Z$, the following statements are true for the measure $CNF\mhyphen1$.
\begin{enumerate}[leftmargin=*]
    \item (\textbf{Reflexivity and Symmetry.}) \small{$CNF\mhyphen1(X_i,X_i|X_o)=0$, $CNF\mhyphen1(X_i,X_j|X_o)=CNF\mhyphen1(X_j,X_i|X_o)$}.
     \item \normalsize{(\textbf{Positivity.}) $CNF\mhyphen1(X_i, X_j)>0$ if and only if $X_i, X_j$ are confounded. Given an observed confounding variable $X_o$ between $X_i, X_j$, $CNF\mhyphen1(X_i,X_j|X_o)>0$ if and only if there exists an unobserved confounding variable $Z$ between $X_i,X_j$.}
    \item (\textbf{Monotonicity.}) $CNF\mhyphen1(X_i, X_j) > CNF\mhyphen1(X_k, X_l)$ implies that the pair of variables $X_i, X_j$ are more strongly confounded than  the pair of variables $X_k, X_l$ in the sense of Defns.~\ref{defn noconfounding} and~\ref{def noncollapsibility}. 
\end{enumerate}
\end{restatable}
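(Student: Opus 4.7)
The plan is to establish each of the three properties separately, relying on Definition~\ref{def confounding1} and the tabulated behaviour of directed information in Table~\ref{tab:directedinfo}. The overall strategy is to first verify the algebraic facts (reflexivity and symmetry), then leverage do-calculus to characterise when each directed information vanishes (positivity), and finally exploit the strict monotonicity of $1-e^{-x}$ to lift statements about directed information to statements about $CNF\mhyphen1$.

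For reflexivity, I would first argue that $I(X_i \rightarrow X_i \mid X_o) = D_{KL}(\mathbb{P}(X_i \mid X_i, X_o)\,\|\,\mathbb{P}(X_i \mid do(X_i), X_o)) = 0$, since both conditionals collapse to the Dirac distribution at the observed value of $X_i$, and the analogous identity holds in the opposite direction. Hence the minimum in the exponent is zero and $CNF\mhyphen1(X_i, X_i \mid X_o) = 1 - e^{0} = 0$. Symmetry follows immediately from the symmetry of $\min(\cdot,\cdot)$, so the value of $CNF\mhyphen1$ depends only on the unordered pair $\{X_i, X_j\}$.

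For positivity, the key identities come from standard do-calculus. If $X_i$ and $X_j$ are unconfounded, then in the DAG $\mathcal{G}$ they are either marginally independent or connected by a single directed path, say $X_i \rightarrow \cdots \rightarrow X_j$. In the latter case $\mathbb{P}(X_j \mid do(X_i)) = \mathbb{P}(X_j \mid X_i)$ by the second rule of do-calculus (no back-door), giving $I(X_j \rightarrow X_i) = 0$; in the former case both directed informations vanish. Either way $\min(I(X_i\rightarrow X_j), I(X_j\rightarrow X_i)) = 0$ and $CNF\mhyphen1(X_i, X_j) = 0$. Conversely, the presence of a confounder opens an unblocked back-door path in both directions, breaking the equalities $\mathbb{P}(X_i \mid X_j) = \mathbb{P}(X_i \mid do(X_j))$ and $\mathbb{P}(X_j \mid X_i) = \mathbb{P}(X_j \mid do(X_i))$, which is precisely the content of Table~\ref{tab:directedinfo}. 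The conditional claim follows the same logic: conditioning on an observed confounder $X_o$ blocks the back-door through $X_o$, so under Assumption~\ref{asm sufficiency} the only remaining source of asymmetry between the observational and interventional conditionals is an unobserved confounder $Z$, whence the conditional minimum is positive iff such a $Z$ exists.

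Monotonicity is immediate from the strict monotonicity of $f(x) = 1 - e^{-x}$ on $[0, \infty)$: the inequality $CNF\mhyphen1(X_i, X_j) > CNF\mhyphen1(X_k, X_l)$ forces $\min(I(X_i \rightarrow X_j), I(X_j \rightarrow X_i)) > \min(I(X_k \rightarrow X_l), I(X_l \rightarrow X_k))$, so the symmetric KL gap between conditional and interventional distributions is strictly larger for $(X_i, X_j)$ in both directions. Since Defns.~\ref{defn noconfounding} and~\ref{def noncollapsibility} identify exactly this gap with the notion of confounding/noncollapsibility, the ordering lifts to ``more strongly confounded''. The main obstacle I anticipate is the ``only if'' direction of positivity under Assumption~\ref{asm sufficiency}: one must carefully rule out the edge case where the back-door contributions cancel in the KL divergence to yield a vanishing directed information despite confounding (ruled out by strict positivity of KL unless the two conditionals coincide pointwise), and also invoke Proposition~\ref{prop1} to ensure that $\mathbb{P}(X_j \mid do(X_i))$ is faithfully recovered by the contextual average used to estimate the measure.
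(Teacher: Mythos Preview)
Your proposal is correct and follows essentially the same approach as the paper: reflexivity via the degenerate self-conditioning, symmetry via the $\min$, positivity via the back-door characterisation of when $\mathbb{P}(\cdot\mid\cdot)=\mathbb{P}(\cdot\mid do(\cdot))$, and monotonicity via the strict increase of $1-e^{-x}$. If anything, your treatment of the ``only if'' direction of positivity and the explicit invocation of do-calculus is more careful than the paper's own proof, which argues only the ``confounded $\Rightarrow$ positive'' direction and leaves the converse implicit.
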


\subsection{Setting 2: Detecting and Measuring Confounding Using the Mechanism Shifts of $Z$.}
\label{subsec setting2}

The previous setting utilizes the interventional effects of $X_i (X_j)$ on $X_j (X_i)$ to define a measure of confounding between $X_i,X_j$. In this setting, we utilize the association between the observed marginal distributions of $X_i, X_j$ under causal mechanism shifts of $Z$ to measure confounding. To this end, similar to~\citep{sarahdetecting}, we make the following assumption.

\begin{asm}
\label{asm shiftfaithfulness}
    \textbf{(Shift Faithfulness~\citep{sarahdetecting})} Let $Z$ be a common parent for a set of variables $\textbf{X}_S\subseteq \textbf{X}$. Then each causal mechanism shift in $Z$ between two contexts $c, c'$ entails a causal mechanism change in each $X_i\in \textbf{X}_S$ between the same contexts $c, c'$.
\end{asm}

One consequence of the Assumption~\ref{asm shiftfaithfulness} is that a change in the causal mechanism of $Z$ induces correlations between the expectations of $X_i, X_j$ in different contexts. To understand this, consider the following structural equations.
\begin{equation}
\label{example scm}
    Z\sim \mathcal{N}(\mu(c), \sigma^2(c))\hspace{40pt}
    X_i\coloneqq\alpha Z +\epsilon_i\hspace{40pt}
    X_j\coloneqq \beta X_i + \gamma Z + \epsilon_j
\end{equation}
Where $c$ denotes the context and $\epsilon_x$ and $\epsilon_y$ are noise variables with zero mean and have no additional restriction on the underlying probability distribution. The causal graph corresponding to this model has the nodes $X_i, X_j, Z$ and edges: $Z\rightarrow X_i, Z\rightarrow X_j, X_i\rightarrow X_j$. It is easy to see that $\mathbb{E}(X_i) = \alpha \mu(c)$ and $\mathbb{E}(X_j) = (\alpha\beta + \gamma) \mu(c)$. Following Assumption~\ref{asm shiftfaithfulness}, whenever there is a change in causal mechanism of $Z$ (e.g., $c$ changes to $\tilde{c}$ in Eqn.~\ref{example scm}), there is a change in both $\mathbb{E}(X_i), \mathbb{E}(X_j)$. Additionally, since $Z$ is a common cause of both $X_i, X_j$, there is a spurious association between $\mathbb{E}(X_i), \mathbb{E}(X_j)$. Subsequently, in the set of contexts $\mathbf{C}_{\{i\}\wedge \{j\}}$ the values $\mathbb{E}(X_i)$, $\mathbb{E}(X_i)$ are spuriously associated. Under Assumptions~\ref{asm sparsecm} and \ref{asm shiftfaithfulness}, restricting our analysis to $\mathbf{C}_{\{i\}\wedge \{j\}}$ ensures that with high probability, the association between $\mathbb{E}(X_i)$, $\mathbb{E}(X_i)$ is due to the confounding variable $Z$. In this example, the association between $\mathbb{E}(X_i), \mathbb{E}(X_j)$ exists even if $\beta=0$, i.e., $X_i\not \rightarrow X_j$. To define confounding measure, we create two random variables $E^C_i, E^C_j$ which we define as $E^C_i=\mathbb{E}_{X_i\sim\mathbb{P}^c(X_i)}(X_i), E^C_j=\mathbb{E}_{X_j\sim\mathbb{P}^c(X_j)}(X_j)$ respectively where $c\in \mathbf{C}_{\{i\}\wedge \{j\}}$.  Relying on the context information $\mathbf{C}_{\{i\}\wedge \{j\}}$ and utilizing the association between $E^C_i$ and $E^C_j$, we define a confounding measure as follows. 
\begin{restatable}[]{propn}{proptwo}
\label{def micnf}
    \textbf{(Confounding Based on Mutual Information)} If two variables $X_i, X_j$ are confounded by a variable $Z$, the induced random variables $E_i^C, E_j^C$ as described above have non zero mutual information $I(E^C_i;E^C_j)$.
\end{restatable}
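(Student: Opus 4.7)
The plan is to show that when $X_i$ and $X_j$ are confounded by $Z$, the restriction to $\mathbf{C}_{\{i\}\wedge \{j\}}$ selects precisely those contexts in which both marginals $\mathbb{P}^c(X_i)$ and $\mathbb{P}^c(X_j)$ are deformed by a \emph{common} source of variation, namely a mechanism shift of $Z$. That common source will induce statistical dependence between $E_i^C$ and $E_j^C$, from which $I(E_i^C;E_j^C)>0$ follows.

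First I would characterize which contexts populate $\mathbf{C}_{\{i\}\wedge \{j\}}$. By Assumption~\ref{asm shiftfaithfulness}, a mechanism shift of the common parent $Z$ between two contexts $c,c'$ forces a mechanism shift in both $X_i$ and $X_j$, so every context obtained by shifting $Z$ alone is contained in $\mathbf{C}_{\{i\}\wedge \{j\}}$. By Assumption~\ref{asm sparsecm}, the probability of independently shifting $X_i$ and $X_j$'s own mechanisms simultaneously is at most $p^2<1/4$ of the probability of shifting a single mechanism, so a positive fraction of contexts in $\mathbf{C}_{\{i\}\wedge \{j\}}$ arise from pure $Z$-shifts. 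Let $\mathcal{Z}^c$ denote the distribution of $Z$ in context $c$; across this subfamily of contexts $\mathcal{Z}^C$ is non-degenerate (otherwise there would be no mechanism shift of $Z$ to speak of).

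Second I would write out how $E_i^C$ and $E_j^C$ depend on $\mathcal{Z}^C$. Under Assumption~\ref{asm icm}, when only $Z$'s mechanism changes, $\mathbb{P}(X_i\mid \mathbf{PA}_i)$ and $\mathbb{P}(X_j\mid \mathbf{PA}_j)$ are preserved, so marginalising yields
\begin{equation}
E_i^C = \int \mathbb{E}[X_i\mid Z=z]\,d\mathcal{Z}^C(z), \qquad
E_j^C = \int \mathbb{E}[X_j\mid Z=z]\,d\mathcal{Z}^C(z)
\end{equation}
(with a straightforward additive modification through $X_i$ if $X_i\to X_j$, which does not affect the argument). Both quantities are non-constant functionals of the same random object $\mathcal{Z}^C$: non-constancy of $z\mapsto \mathbb{E}[X_i\mid Z=z]$ and $z\mapsto \mathbb{E}[X_j\mid Z=z]$ follows from $Z$ being a genuine cause of each $X_i, X_j$, combined with Assumption~\ref{asm shiftfaithfulness} (which rules out cancellations that would erase the induced shift).

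Third, I would conclude by the data-processing inequality and the fact that two non-trivial, co-monotone functionals of the same randomised object cannot be independent: since $E_i^C$ and $E_j^C$ are deterministic functions of $\mathcal{Z}^C$, and both are genuinely responsive to $\mathcal{Z}^C$, their joint distribution does not factorise, so $I(E_i^C;E_j^C)>0$. The main obstacle is the third step, which requires ruling out pathological cancellations: in principle the contribution from $Z$-shift contexts could be offset by the (rare, by sparsity) contexts where $X_i$ and $X_j$ themselves shift. I would handle this either by noting that under Assumption~\ref{asm shiftfaithfulness} and generic parameterisations of the mechanisms the two functionals $f_i(\mathcal{Z})$ and $f_j(\mathcal{Z})$ are not orthogonal in the $L^2$-sense along the direction of $Z$-shifts, and invoking that any such non-orthogonality is sufficient to yield strictly positive mutual information.
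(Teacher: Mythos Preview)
Your approach is essentially the same as the paper's: invoke Assumption~\ref{asm shiftfaithfulness} to argue that mechanism shifts of the common parent $Z$ force simultaneous changes in the marginals of $X_i$ and $X_j$, and hence dependence between $E_i^C$ and $E_j^C$. The paper's own proof is considerably shorter (three sentences) and simply lists two sources of dependence---any direct causal link $X_i\to X_j$ (since interventions are soft) and the confounder $Z$ via shift faithfulness---without the integral representation, the sparsity-based accounting of which contexts populate $\mathbf{C}_{\{i\}\wedge\{j\}}$, or the data-processing step. Your version is more careful, and in particular you explicitly flag the cancellation issue (two deterministic functionals of the same random object can in principle be independent), which the paper does not address; your resolution via genericity of the mechanisms is at the same level of rigor as what the paper implicitly assumes.
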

\begin{defn}
\label{def confounding2}
\textbf{(Confounding Measure 2)}
When the causal mechanism shifts are observed for $X_i, X_j$ in different contexts and the contexts $\mathbf{C}_{\{i\}\wedge \{j\}}$ are known, under the Assumptions~\ref{asm sparsecm}-\ref{asm shiftfaithfulness}, the measure of confounding $CNF\mhyphen2(X_i,X_j)$ between $X_i$ and $X_j$ is defined as 
\begin{equation}
\label{eq: cnf2}
CNF\mhyphen2(X_i,X_j) := 1-e^{-I(E^C_i;E^C_j)}
\end{equation} 
\end{defn}

To measure the unobserved confounding strength when we already observe a confounding variable $X_o$, we condition on the observed confounding variable $X_o$ to define $CNF\mhyphen2(X_i, X_j|X_o)$ as follows.
\begin{equation}
    CNF\mhyphen2(X_i, X_j|X_o):= 1-e^{-I(E_i^C; E_j^C|X_o)}
\end{equation}

\noindent \textbf{Beyond Pairwise Confounding:} Following earlier work~\citep{sarahdetecting}, we utilize total correlation among triplets $(E_i^C,E_j^C,E_k^C)$ of random variables in $\{E^C_i\}_{i\in S}$ to verify whether a set of variables $\mathbf{X}_S$ are jointly confounded.
By Assumption~\ref{asm shiftfaithfulness}, we know that the variables in $\mathbf{X}_S$ jointly confounded only if each pair $X_i,X_j; \ \ i,j\in S$
is pairwise confounded. If all three variables share the same latent confounding variable
$Z$, then knowing about one of $E^C_i, E^C_j, E^C_k$ explains away some of the association between the other two, so that we have $I(E^C_i , E^C_j | E^C_k ) <
I(E^C_i , E^C_j)$. However, for a triplet $(X_i, X_j, X_k)$, it is possible that, rather than jointly confounded, there may be three disjoint confounding variables $Z_{12} , Z_{13} , Z_{23}$ confounding each of the individual pairs: $(X_i,X_j), (X_j,X_k), (X_k,X_i)$. In general, for a set of variables of size $s$ to permit such an equivalent explanation, we would need to have a total of $\binom s2$ confounding variables with $s(s-1)$ outgoing edges to obtain the same structure of pairwise
confounding~\citep{sarahdetecting}. While this may plausibly occur for small sets of variables that appear to be pairwise correlated, we assume the true graph $\mathcal{G}$ to be causally minimal
in the following sense.

\begin{asm}
\label{cnfminimality}
    \textbf{(Confounder Minimality~\citep{sarahdetecting})} For every subset $\mathbf{X}_S$ of at least $|S| \geq 4$ variables, there are at most $2|S|$ edges incoming into $\mathbf{X}_S$ from latent confounding variables with at least three children in $\mathbf{X}_S$.
\end{asm}
Assumption~\ref{cnfminimality} ensures that variables that appear to be jointly confounded are indeed confounded. In other words, when a small number of latent
variables suffice to explain the observed correlations, there should indeed exist only few confounding variables. With this assumption, we can guarantee that joint confounding can be identified from the total correlation.
\begin{restatable}[]{therm}{theoremthree}
    \label{thm3}
     Let $\mathbf{X}_S$ be a set of variables such that all $X_i, X_j \in \mathbf{X}_S$ are pairwise confounded. Then $\mathbf{X}_S$
is jointly confounded if and only if for each triple $X_i , X_j , X_k \in \mathbf{X}_S$ we have $I(E^C_i; E^C_j|E^C_k) < I(E^C_i; E^C_j)$.
\end{restatable}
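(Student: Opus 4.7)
The plan is to prove the biconditional in two directions, relying on Shift Faithfulness (Assumption~\ref{asm shiftfaithfulness}) for the forward implication and Confounder Minimality (Assumption~\ref{cnfminimality}) for the reverse. Throughout, the crucial link is that $E_\ell^C$ captures exactly how shifts of latent parents of $X_\ell$ propagate across contexts, so the three-way signature of the triple $(E_i^C, E_j^C, E_k^C)$ encodes whether a single latent drives all of them or not.

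For the forward direction ($\Rightarrow$), I would suppose $\mathbf{X}_S$ is jointly confounded by a latent $Z$ and consider any triple $X_i, X_j, X_k \in \mathbf{X}_S$. By Assumption~\ref{asm shiftfaithfulness}, every mechanism shift of $Z$ produces a simultaneous shift in the mechanisms of $X_i, X_j, X_k$, so the variation of $(E_i^C, E_j^C, E_k^C)$ across the contexts in $\mathbf{C}_{\{i\}\wedge\{j\}\wedge\{k\}}$ is driven by a single common latent signal, namely the contextual state of $Z$, together with independent noise components whose independence is guaranteed by Assumption~\ref{asm icm}. This realizes the common-cause structure in which $E_i^C, E_j^C$ are conditionally independent given the contextual state of $Z$, so conditioning on $E_k^C$ carries strictly positive information about $Z$ and yields $I(E_i^C; E_j^C \mid E_k^C) < I(E_i^C; E_j^C)$ by standard explaining-away.

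For the reverse direction ($\Leftarrow$), I would argue the contrapositive: suppose $\mathbf{X}_S$ is pairwise confounded but not jointly confounded, and extract a triple violating the strict inequality. The absence of joint confounding combined with pairwise confounding forces a family of distinct latents each covering only a strict subset of $\mathbf{X}_S$. Assumption~\ref{cnfminimality} bounds the number of latents with three or more children in $\mathbf{X}_S$, so for $|S|$ large enough one can locate a triple whose pairwise confounding is witnessed exclusively by three distinct two-child latents $Z_{ij}, Z_{ik}, Z_{jk}$. Because these latents shift independently across contexts (Assumption~\ref{asm icm}) and Sparse Mechanism Shift (Assumption~\ref{asm sparsecm}) makes simultaneous shifts rare, the contexts in $\mathbf{C}_{\{i\}\wedge\{j\}\wedge\{k\}}$ do not carry a single shared signal across all three induced expectations; conditioning on $E_k^C$ cannot capture information about $Z_{ij}$, the sole shared driver of $E_i^C$ and $E_j^C$, so the conditional mutual information fails to strictly decrease, contradicting the hypothesis.

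The main obstacle will be making the reverse direction fully rigorous, since Assumption~\ref{cnfminimality} only restricts latents with three or more children and therefore does not directly preclude a $\binom{|S|}{2}$-fold collection of two-child confounders. The proof must combine the edge-count bound with a careful case analysis that exploits the difference between the joint-confounded and disjoint-pairwise-confounded signatures within the restricted context set $\mathbf{C}_{\{i\}\wedge\{j\}\wedge\{k\}}$, likely via a secondary technical lemma showing that in the disjoint regime the information $E_k^C$ carries about $(Z_{ik}, Z_{jk})$ is orthogonal to the $Z_{ij}$-driven dependence between $E_i^C$ and $E_j^C$. A measure-theoretic treatment of the explaining-away inequality will also be needed to cover general (possibly continuous) latent alphabets.
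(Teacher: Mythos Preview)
Your proposal follows the same core mechanism as the paper's proof---the explaining-away argument that conditioning on $E_k^C$ absorbs part of the $Z$-driven dependence between $E_i^C$ and $E_j^C$---so in spirit the approaches coincide. The paper's proof, however, is a two-sentence sketch: it invokes Assumption~\ref{cnfminimality}, asserts the explaining-away phenomenon for the jointly confounded case, and stops. It does not separately argue the reverse implication, does not isolate the role of Shift Faithfulness in the forward direction, and does not confront the disjoint-pairwise-confounder scenario you flag.

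Your treatment is therefore strictly more careful than the paper's. You correctly separate the two directions, correctly attribute the forward direction to Assumption~\ref{asm shiftfaithfulness} (the paper somewhat confusingly cites Assumption~\ref{cnfminimality} there), and correctly locate the real difficulty in the contrapositive: showing that a triple confounded only by three distinct two-child latents $Z_{ij}, Z_{ik}, Z_{jk}$ fails the strict inequality. The obstacle you describe---that Assumption~\ref{cnfminimality} bounds only latents with at least three children and does not by itself rule out the $\binom{|S|}{2}$ two-child configuration, and that one must argue $E_k^C$ carries no information about $Z_{ij}$---is genuine and is not addressed in the paper's proof at all. Your plan to handle it via an orthogonality lemma is reasonable, though you should be aware that conditioning on $E_k^C$ can open the collider at $X_k$ between $Z_{ik}$ and $Z_{jk}$, so a clean ``no decrease'' argument will need to track that induced dependence as well; the paper does not engage with this subtlety.
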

Now, the measure of joint confounding among a set of variables $\mathbf{X}_S$ can be defined using total correlation $T(E^C_i,\dots,E^C_{|S|})$ as follows. To evaluate the following expression, we need to use the contexts $\mathbf{C}_{\{1\}\cup\dots\cup\{|S|\}}$ to ensure that with high probability, the association among the variables in $\mathbf{X}_S$ is due to the joint confounding variable $Z$.
\begin{equation}
    CNF\mhyphen2(\mathbf{X}_S) = 1-e^{-T(E^C_i,\dots,E^C_{|S|})}
\end{equation}

\begin{restatable}[]{therm}{theoremfour}
\label{thm4}
For any three observed variables $X_i,X_j,X_o$ and an unobserved confounding variable $Z$, the following statements are true for the measure $CNF\mhyphen2$.
\begin{enumerate}[leftmargin=*]
   \item (\textbf{Reflexivity and Symmetry.}) $CNF\mhyphen2(X_i,X_i|X_o)=1-e^{-H(E^C_i|X_o)}\ \ \forall i$ where $H(.|.)$ denotes conditional entropy and $CNF\mhyphen2(X_i,X_j|X_o)=CNF\mhyphen2(X_j,X_i|X_o)$.
    \item (\textbf{Positivity.}) $CNF\mhyphen2(X_i, X_j)>0$ if and only if $X_i, X_j$ are confounded. Given an observed confounding variable $X_o$ between $X_i, X_j$, $CNF\mhyphen2(X_i,X_j|X_o)>0$ if and only if there exists an unobserved confounding variable $Z$ between $X_i,X_j$.
    \item (\textbf{Monotonicity.}) $CNF\mhyphen2(X_i, X_j) > CNF\mhyphen2(X_k, X_l)$ implies that the pair of variables $X_i, X_j$ are more strongly confounded than  the pair of variables $X_k, X_l$ in the sense of Defn.~\ref{def micnf}.
\end{enumerate}
\end{restatable}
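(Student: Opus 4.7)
The plan is to handle the three properties in the order stated, leaning on Proposition~\ref{def micnf}, the restricted context set $\mathbf{C}_{\{i\}\wedge\{j\}}$, and the invertibility of the map $t \mapsto 1 - e^{-t}$ on $[0,\infty)$.

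For \textbf{reflexivity and symmetry}, I would first observe that conditional mutual information reduces to conditional entropy on the diagonal, i.e.\ $I(E_i^C; E_i^C \mid X_o) = H(E_i^C \mid X_o)$. Substituting this into the definition of $CNF\mhyphen2(X_i, X_i \mid X_o)$ immediately gives $1 - e^{-H(E_i^C \mid X_o)}$. Symmetry then follows directly from the symmetry of (conditional) mutual information in its two arguments, which transfers to $CNF\mhyphen2$ since the outer $1 - e^{-(\cdot)}$ transformation depends only on the value of $I(E_i^C; E_j^C \mid X_o)$.

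For \textbf{positivity}, the forward direction is Proposition~\ref{def micnf} itself: if $X_i$ and $X_j$ share a latent confounder $Z$, then by Assumption~\ref{asm shiftfaithfulness} every mechanism shift in $Z$ induces simultaneous shifts in $X_i$ and $X_j$, producing a nontrivial dependence between $E_i^C$ and $E_j^C$ across $\mathbf{C}_{\{i\}\wedge\{j\}}$, hence $I(E_i^C; E_j^C) > 0$ and $CNF\mhyphen2 > 0$. The reverse direction is the main obstacle and requires ruling out every other source of association between $E_i^C$ and $E_j^C$ on $\mathbf{C}_{\{i\}\wedge\{j\}}$: if $X_i, X_j$ were unconfounded, then under Assumptions~\ref{asm sparsecm}, \ref{asm sharedcausalgraph}, and \ref{asm shiftfaithfulness} the joint event of mechanism shifts at both $X_i$ and $X_j$ inside this context class would occur only via independent shifts (assigned independently by sparsity to distinct mechanisms via Assumption~\ref{asm icm}), so the induced $E_i^C$ and $E_j^C$ would be independent and the mutual information would vanish. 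The conditional version is analogous: conditioning on $X_o$ removes the dependence channel routed through the observed confounder, so the remaining mutual information is strictly positive exactly when a hidden confounder $Z$ still links $X_i$ and $X_j$; this uses Assumption~\ref{asm sufficiency} to ensure that no other common parent outside $\mathbf{X}\cup\mathbf{Z}$ could explain the residual dependence.

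For \textbf{monotonicity}, since $t \mapsto 1 - e^{-t}$ is strictly increasing on $[0,\infty)$, the inequality $CNF\mhyphen2(X_i,X_j) > CNF\mhyphen2(X_k,X_l)$ is equivalent to $I(E_i^C; E_j^C) > I(E_k^C; E_l^C)$. I would then argue that, in the sense of Definition~\ref{def micnf} (Proposition), a strictly larger mutual information between the context-induced expectation random variables corresponds to a stronger confounding-induced association between the observed marginals. Making this precise amounts to noting that under Assumption~\ref{asm shiftfaithfulness} the only source of cross-context covariation between $E_i^C$ and $E_j^C$ within $\mathbf{C}_{\{i\}\wedge\{j\}}$ is the shared latent mechanism, so $I(E_i^C; E_j^C)$ is a monotone functional of the confounding channel strength; the hard part here will be stating this ordering in a way that is both model-free and consistent with Definition~\ref{def micnf}, and I would phrase it as a purely ordinal claim rather than attempt a quantitative comparison.
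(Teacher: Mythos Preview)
Your proposal is correct and follows essentially the same route as the paper: reflexivity via $I(E_i^C;E_i^C\mid X_o)=H(E_i^C\mid X_o)$, symmetry from the symmetry of mutual information, the forward direction of positivity via Assumption~\ref{asm shiftfaithfulness}, and monotonicity via the strict monotonicity of $t\mapsto 1-e^{-t}$ together with the ordinal reading of Proposition~\ref{def micnf}. You are in fact more careful than the paper on the converse in the positivity claim---the paper's proof argues only the forward implication explicitly and leaves the ``only if'' to the informal ``with high probability'' discussion surrounding Assumptions~\ref{asm sparsecm} and~\ref{asm shiftfaithfulness} in the main text.
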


\subsection{Setting 3: Observing the Causal Mechanism Shifts in $Z$ and Known Causal Path Direction Between $X_i$ and $X_j$}
\label{subsec setting3}

Similar to the previous settings, we utilize marginal and conditional distributions of $X_i, X_j$ to define a measure of confounding. By prior knowledge, if we know the direction of causal path between $X_i, X_j$, we can utilize the causal direction to measure confounding as explained below. In addition to the notations $E^C_i, E^C_j$ introduced in the previous setting, let us denote for each $c\in \mathbf{C}_{\{i\}\wedge \{j\}}, \mathbb{E}_{X_i\sim\mathbb{P}^c(X_i|X_j)}(X_i|X_j), \mathbb{E}_{X_j\sim\mathbb{P}^c(X_j|X_i)}(X_j|X_i)$ with $E^C_{ij}, E^C_{ji}$ respectively. We now leverage dependency among these variables to define the measure of confounding. Intuitively, if $X_i\rightarrow X_j$ and if we observe a change in the causal mechanisms of both $X_i, X_j$ due to the causal mechanism changes in $Z$, we also observe a change in the causal mechanism $\mathbb{P}(X_j|X_i)$.

\begin{defn}
\label{def confounding3}
\textbf{(Confounding Measure 3)} When the causal mechanism shifts are observed for $X_i,X_j$ and the causal direction between the nodes $X_i, X_j$ is known, under the Assumptions~\ref{asm sparsecm}-\ref{asm shiftfaithfulness},
the measure of confounding $CNF\mhyphen3(X_i,X_j)$ between $X_i\in\mathbf{X}$ and $X_j\in\mathbf{X}$ is defined as 
\begin{equation}
\label{eq: cnf3}
CNF\mhyphen3(X_i,X_j) := \begin{cases} 1-e^{-I(E^C_{ji};E^C_j)}\ \ \ if \ \ \ X_i\rightarrow \dots \rightarrow X_j\\
1-e^{-I(E^C_{ij};E^C_i)}\ \ \ if \ \ \ X_j\rightarrow \dots \rightarrow X_i\\
CNF\mhyphen2(X_i, X_j)\ \ \ Otherwise\\
\end{cases}
\end{equation} 
\end{defn}
To measure the unobserved confounding strength in the presence of an observed confounding variable $X_o$, similar to setting 2, we can modify Eqn.~\ref{eq: cnf3} to condition on the variable $X_o$.

\noindent \textbf{Beyond Pairwise Confounding:} Using the Assumption~\ref{cnfminimality}, we have the following.
\begin{restatable}[]{therm}{theoremfive}
    \label{thm5}
     Let $\mathbf{X}_S$ be a set of variables such that all $X_i, X_j \in \mathbf{X}_S$ are pairwise confounded and the causal relationships among each pair $X_i, X_j$. Then $\mathbf{X}_S$
is jointly confounded if and only if for each triple $X_i , X_j , X_k \in \mathbf{X}_S$ we have $I(E^C_{ij}; E^C_{jk}| E^C_j) < I(E^C_{ij}; E^C_{jk})$.
\end{restatable}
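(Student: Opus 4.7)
The plan is to adapt the argument used for Theorem~\ref{thm3} to the directed setting, replacing the marginal expectation variables with the conditional ones $E^C_{ij}, E^C_{jk}$ and using $E^C_j$ as the natural "screening" variable. The guiding intuition is once again \emph{explaining away}: if $X_i, X_j, X_k$ share a single common confounder $Z$, then shifts in $Z$ across contexts produce coupled shifts in all three conditional mechanisms touching $X_j$, and $E^C_j$ summarizes enough of the per-context state of $Z$ to strictly reduce the mutual information between $E^C_{ij}$ and $E^C_{jk}$.

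For the forward direction, I would assume $\mathbf{X}_S$ is jointly confounded by a common $Z$ and restrict to contexts in $\mathbf{C}_{\{i\}\wedge\{j\}\wedge\{k\}}$. By Shift Faithfulness (Assumption~\ref{asm shiftfaithfulness}), each such context corresponds to a shift in the mechanism of $Z$ that simultaneously perturbs the conditional distributions of $X_i, X_j, X_k$. Because the causal directions within each pair are known, both $E^C_{ij}$ and $E^C_{jk}$ pick up this perturbation through $X_j$'s direct dependence on $Z$, producing a positive $I(E^C_{ij}; E^C_{jk})$. The marginal summary $E^C_j$ is also driven by the same context-varying latent state, and Independent Causal Mechanisms (Assumption~\ref{asm icm}) guarantees that the residual variation in $E^C_{ij}$ and $E^C_{jk}$ after conditioning on $E^C_j$ is dominated by independent mechanism-local noise. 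The strict drop $I(E^C_{ij}; E^C_{jk}\mid E^C_j) < I(E^C_{ij}; E^C_{jk})$ then follows, essentially by the same chain-rule argument as in Theorem~\ref{thm3}.

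For the reverse direction, I would argue by contrapositive. Suppose $\mathbf{X}_S$ is pairwise but not jointly confounded. By Confounder Minimality (Assumption~\ref{cnfminimality}), the pairwise associations on any such triple must arise from disjoint pairwise confounders $Z_{ij}, Z_{jk}, Z_{ik}$ rather than from a common parent. Since these latents are mutually independent by the model assumptions, the context-driven shifts in $E^C_{ij}$ (sourced from $Z_{ij}$) and $E^C_{jk}$ (sourced from $Z_{jk}$) come from independent sources, and $E^C_j$, which only aggregates the marginal effects on $X_j$ of these disjoint confounders, carries no information that further separates them. The Markov Property under Mechanism Shifts (Assumption~\ref{asm sharedcausalgraph}) lets me formalize this as a d-separation-style statement over the context-extended graph, from which $I(E^C_{ij}; E^C_{jk}\mid E^C_j) = I(E^C_{ij}; E^C_{jk})$ follows, contradicting the hypothesized strict inequality.

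The main obstacle will be formally justifying that $E^C_j$ captures exactly the context-dependent state of a single common confounder while remaining orthogonal to the sources of variation induced by disjoint pairwise confounders. This requires a careful decomposition of how each per-context mechanism contributes to both the marginal variable $E^C_j$ and to the conditional variables $E^C_{ij}, E^C_{jk}$, combining Shift Faithfulness, Independent Causal Mechanisms, and Confounder Minimality in a single Markov-type argument — paralleling, but strengthening, the total-correlation explanation used in Theorem~\ref{thm3}.
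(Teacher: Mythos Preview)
Your proposal is correct and follows essentially the same ``explaining away'' intuition as the paper's own proof: a single common confounder $Z$ drives coordinated shifts so that conditioning on $E^C_j$ absorbs part of the shared dependence between $E^C_{ij}$ and $E^C_{jk}$. The paper's proof is in fact terser than yours---it states only the forward direction (joint confounding $\Rightarrow$ strict inequality) via the explaining-away heuristic and Assumption~\ref{cnfminimality}, without spelling out the contrapositive or the role of Assumptions~\ref{asm icm} and~\ref{asm sharedcausalgraph}---so your treatment of the reverse direction via disjoint pairwise confounders is more complete than what the paper itself provides.
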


Since we have access to random variables $E^C_{ij}$ in addition to $E^C_i, E^C_j$, it is not straightforward to use all of them to measure joint confounding. To keep the measure simple, we let the measure of joint confounding among the variables $\mathbf{X}_S$ be the same as $CNF\mhyphen2(\mathbf{X}_S)$. That is, $CNF\mhyphen3(\mathbf{X}_S)= CNF\mhyphen2(\mathbf{X}_S)$. Setting 3 is an alternative to Setting 2 when we know the direction of the causal path between $X_i, X_j$. Settings 2 and 3 act as complementary to each other in validating the correctness of our analysis. 

\begin{restatable}[]{therm}{theoremsix}
\label{thm6}
For any three observed variables $X_i,X_j,X_o$ and an unobserved confounding variable $Z$, the following statements are true for the measure $CNF\mhyphen3$.
\begin{enumerate}[leftmargin=*]
   \item (\textbf{Reflexivity and Symmetry.}) $CNF\mhyphen3(X_i,X_i|X_o)=1-e^{-H(E^C_i|X_o)}\ \ \forall i$ where $H(.|.)$ denotes conditional entropy and $CNF\mhyphen3(X_i,X_j|X_o)=CNF\mhyphen3(X_j,X_i|X_o)$.
    \item (\textbf{Positivity.}) $CNF\mhyphen3(X_i, X_j)>0$ if and only if $X_i, X_j$ are confounded. Given an observed confounding variable $X_o$ between $X_i, X_j$, $CNF\mhyphen3(X_i,X_j|X_o)>0$ if and only if there exists an unobserved confounding variable $Z$ between $X_i,X_j$.
    \item (\textbf{Monotonicity.}) $CNF\mhyphen3(X_i, X_j) > CNF\mhyphen3(X_k, X_l)$ implies that the pair of variables $X_i, X_j$ are more strongly confounded than  the pair of variables $X_k, X_l$ in the sense of Defn.~\ref{def micnf}.
\end{enumerate}
\end{restatable}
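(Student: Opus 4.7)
The plan is to mirror the proof strategy used for Theorem~\ref{thm4}, with the extra bookkeeping required by the three cases of Defn.~\ref{def confounding3}. First, for \textbf{reflexivity and symmetry}: when $i=j$, no self-loop exists in a DAG so both directed-path conditions fail, and $CNF\mhyphen3(X_i, X_i|X_o)$ collapses onto the otherwise branch $CNF\mhyphen2(X_i, X_i|X_o) = 1 - e^{-H(E^C_i|X_o)}$ by the corresponding clause of Theorem~\ref{thm4}. For symmetry, I would inspect the branches: if $X_i \to \dots \to X_j$ holds, then Defn.~\ref{def confounding3} selects branch one for $CNF\mhyphen3(X_i,X_j)$ and branch two for $CNF\mhyphen3(X_j,X_i)$, both evaluating to $1-e^{-I(E^C_{ji};E^C_j)}$; the reverse orientation is symmetric, and the otherwise branch inherits symmetry from $CNF\mhyphen2$ via Theorem~\ref{thm4}.

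The \textbf{positivity} paragraph is the substantive one. In the otherwise branch the claim follows directly from Theorem~\ref{thm4}. For the branch $X_i \to \dots \to X_j$, the argument has two halves. If no confounding exists, then under Assumption~\ref{asm icm} the conditional mechanism $\mathbb{P}(X_j|X_i)$ is determined entirely by $\mathbb{P}(X_j|\mathbf{PA}_j)$ together with $\mathbb{P}(X_i)$; once we restrict to $\mathbf{C}_{\{i\}\wedge \{j\}}$ under Assumption~\ref{asm sparsecm}, any lockstep change in the mechanisms of $X_i$ and $X_j$ not explained by a shared latent cause is, with high probability, ruled out by sparsity. Hence $E^C_{ji}$ is essentially degenerate and $I(E^C_{ji};E^C_j) = 0$. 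Conversely, if a latent $Z$ confounds the pair, Assumption~\ref{asm shiftfaithfulness} forces every shift of $\mathbb{P}(Z)$ to propagate into both $\mathbb{P}^c(X_j|X_i)$ (via the direct $Z \to X_j$ edge) and $\mathbb{P}^c(X_j)$, producing a genuine joint distribution on $(E^C_{ji}, E^C_j)$ and therefore $I(E^C_{ji};E^C_j) > 0$. The branch $X_j \to \dots \to X_i$ is handled by an exactly symmetric argument. Conditioning on an observed confounder $X_o$ partials out the portion of association it explains, leaving only the latent contribution, so the same equivalence survives for $CNF\mhyphen3(X_i,X_j|X_o)$.

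\textbf{Monotonicity} is immediate: within each branch $CNF\mhyphen3$ has the form $1 - e^{-I(\cdot\,;\,\cdot)}$, and since $x \mapsto 1 - e^{-x}$ is strictly increasing on $[0, \infty)$, the ordering of the measure coincides with the ordering of the underlying mutual information, which is precisely the strength-of-confounding yardstick of Defn.~\ref{def micnf}.

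The \textbf{main obstacle} I anticipate is the no-confounding direction of positivity for the two directed branches: one must rule out the pathological coincidence that $\mathbb{P}(X_j|X_i)$ varies across contexts without any underlying shared latent cause. The cleanest route is to invoke the sparse-mechanism-shift Assumption~\ref{asm sparsecm} together with independent causal mechanisms in the same ``with high probability'' spirit as the discussion following Eqn.~\ref{example scm} in Setting~2; making this step formally tight is the delicate part, since $E^C_{ji}$ depends on both the mechanism of $X_j$ and the marginal of $X_i$, and one has to carefully track how each of these is permitted to vary within $\mathbf{C}_{\{i\}\wedge \{j\}}$ before concluding degeneracy.
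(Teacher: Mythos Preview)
Your proposal is correct and broadly parallels the paper's own proof, but with two differences worth noting. For reflexivity, the paper does not fall through to the ``otherwise'' branch; it instead evaluates the directed-branch formula directly by observing $I(E^C_{ii};E^C_i|X_o)=I(E^C_i;E^C_i|X_o)=H(E^C_i|X_o)$. Your route via the otherwise branch and Theorem~\ref{thm4} is arguably cleaner, since it avoids having to make sense of the object $E^C_{ii}$. For positivity, the paper only argues the forward implication (confounded $\Rightarrow$ positive) via Assumption~\ref{asm shiftfaithfulness} and says nothing about the converse; your proposal is strictly more thorough in that it attempts the ``only if'' direction and correctly flags it as the delicate step. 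The obstacle you identify---ruling out spurious covariation of $E^C_{ji}$ and $E^C_j$ across $\mathbf{C}_{\{i\}\wedge\{j\}}$ absent a latent common cause---is real and is handled in the paper only at the informal ``with high probability under sparsity'' level (cf.\ the discussion after Eqn.~\ref{example scm}), so your caveat is well placed rather than a gap in your argument. Symmetry and monotonicity match the paper's reasoning essentially verbatim.
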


\section{Algorithm}
\vspace{-5pt}
Algorithm~\ref{alg:cnf} outlines the procedures to measure confounding in all three settings and can be extended to the case where we evaluate conditional confounding and evaluating confounding among multiple variables. We present two real-world examples where our methods can be applied in Appendix \S~\ref{real world examples}.
\SetKwComment{Comment}{/* }{ */}
\begin{algorithm}
\small
\caption{Algorithm for evaluating pairwise $CNF\mhyphen1, CNF\mhyphen2,CNF\mhyphen3$}
\label{alg:cnf}
\KwData{Context information $\mathbf{C}_{\{i\} \wedge \neg P_{ij}}, \mathbf{C}_{\{j\} \wedge \neg P_{ji}}, \mathbf{C}_{\{i\}\wedge\{j\}}$, Contextual Datasets $\{\mathcal{D}^c\}_{c\in \mathbf{C}}$.}
\KwResult{$CNF\mhyphen1(X_i,X_j), CNF\mhyphen2(X_i,X_j), CNF\mhyphen3(X_i,X_j)$}

Step 1:\hspace{5pt} Evaluate $\mathbb{P}(X_i|X_j), \mathbb{P}(X_j|X_i)$ using observational data\;
Step 2:\hspace{5pt} Evaluate $\mathbb{P}(X_i|do(X_j))$ using $\{\mathcal{D}^c\}_{c\in \mathbf{C}_{\{j\} \wedge \neg P_{ji}}}$\;
Step 3:\hspace{5pt} Evaluate $\mathbb{P}(X_j|do(X_i))$ using $\{\mathcal{D}^c\}_{c\in \mathbf{C}_{\{i\} \wedge \neg P_{ij}}}$\;
Step 4:\hspace{5pt} Evaluate $I(X_i\rightarrow X_j), I(X_j\rightarrow X_i)$\;
Step 5:\hspace{5pt} $CNF\mhyphen1(X_i, X_j) = 1-e^{-\min(I(X_i\rightarrow X_j), I(X_j\rightarrow X_i))}$\;
Step 6:\hspace{5pt} Evaluate $E^C_i, E^C_j$ using $\{\mathcal{D}^c\}_{c\in \mathbf{C}_{\{i\}\wedge \{j\}}}$ \;
Step 7:\hspace{5pt} $CNF\mhyphen2(X_i,X_j) = 1-e^{-I(E^c_i;E^c_j)}$\;
Step 8:\hspace{5pt} Evaluate $E^C_{ij}, E^C_{ji}$ using $\{\mathcal{D}^c\}_{c\in \mathbf{C}_{\{i\}\wedge \{j\}}}$ \;
Step 9:\hspace{5pt} compute $CNF\mhyphen3(X_i,X_j)$ according to Defn.~\ref{def confounding3}\;
return $CNF\mhyphen1(X_i,X_j), CNF\mhyphen2(X_i,X_j), CNF\mhyphen3(X_i,X_j)$
\end{algorithm}

\vspace{-11pt}
\section{Experiments and Results}
\vspace{-7pt}
We perform simulation studies to verify the correctness of the proposed measures. All the experiments are run on a CPU. We report the mean and standard deviation of results taken over five random seeds. Code to reproduce the results is presented in the supplementary material. Code is available at \url{https://github.com/gautam0707/CD_CNF}.

\noindent \textbf{Measuring Confounding:} In this set of experiments, we consider the following four causal structures made of three nodes $X_i,X_j,Z$:
$\mathcal{G}_1:$ Empty graph over $Z, X_i, X_j$ i.e., nodes are isolated in the graph, $\mathcal{G}_2: X_i\rightarrow X_j$, $\mathcal{G}_3: Z\rightarrow X_i, Z\rightarrow X_j$, $\mathcal{G}_4:$ $Z\rightarrow X_i, Z\rightarrow X_j, X_i\rightarrow X_j$. In $\mathcal{G}_1, \mathcal{G}_2$, there is no confounding between $X_i, X_j$ and in $\mathcal{G}_3, \mathcal{G}_4$ there is confounding effect of $Z$ on $X_i$ and $X_j$. Results in Fig.~\ref{fig:cnf results} show that our measures output zero when there is no confounding between $X_i,X_j$ and output positive values when $X_i,X_j$ are confounded by a confounding variable $Z$.
\begin{figure}[H]
\vspace{-5pt}
    \centering
\includegraphics[width=0.8\textwidth]{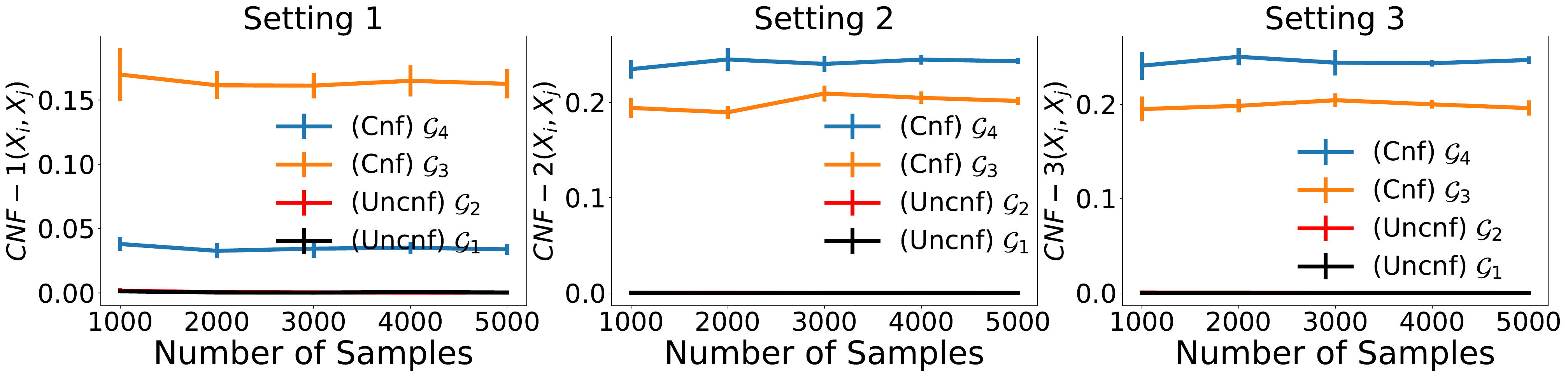}
\vspace{-10pt}
    \caption{\footnotesize Measure of confounding between a pair of variables $X_i, X_j$. Our measures output zero when there is no confounding between $X_i,X_j$ and output positive values when $X_i,X_j$ are confounded.}
    \label{fig:cnf results}
    \vspace{-10pt}
\end{figure}

\begin{wrapfigure}[13]{r}{0.6\textwidth}
    \centering
    \vspace{-15pt}
\includegraphics[width=0.59\textwidth]{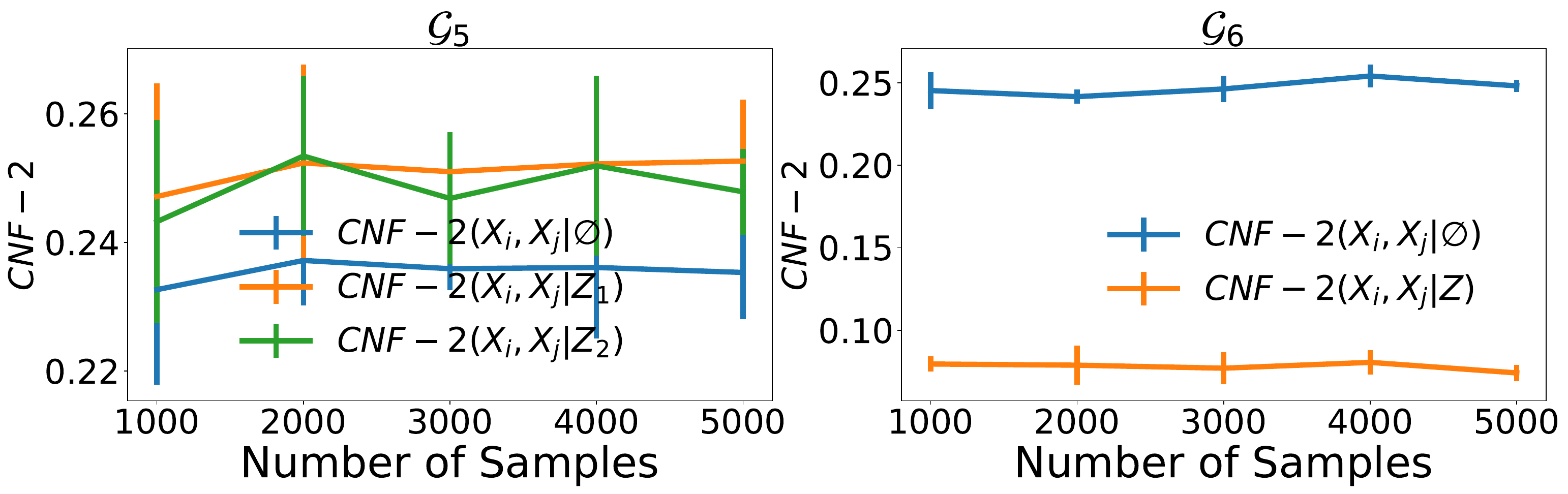}
\vspace{-5pt}
    \caption{\footnotesize \textbf{Left:} Conditioning on one of $\emptyset, Z_1, Z_2$ will not remove confounding between $X_i, X_j$ in $\mathcal{G}_5$. Hence $CNF\mhyphen2$ returns positive values. \textbf{Right:} In $\mathcal{G}_6$, conditioning on $\emptyset$ does not remove the confounding effect of $Z$ on $X_i, X_j$. Hence, we observe a positive value for $CNF\mhyphen2(X_i,X_j|\emptyset)$. Conditioning on $Z$ will block the confounding between $X_i, X_j$. Hence $CNF\mhyphen2$ is closer to zero.}
    \label{fig:cond cnf results}
\end{wrapfigure}
\noindent \textbf{Measuring Conditional Confounding:} We consider the following two causal structures.  $\mathcal{G}_5: Z_1\rightarrow X_i, Z_1\rightarrow X_j, Z_2\rightarrow X_i, Z_2\rightarrow X_j, X_i\rightarrow X_j$. $\mathcal{G}_6: Z\rightarrow X_i, Z\rightarrow X_j, X_i\rightarrow X_j$. In $\mathcal{G}_5$, $X_i$ and  $X_j$ are confounded by two variables $Z_1, Z_2$. We measure conditional confounding between $X_i, X_j$ conditioned on $\emptyset$, $Z_1$, and $Z_2$ respectively. Since confounding still exists in all of the above conditioning settings, $CNF\mhyphen2$ correctly returns positive confounding value in all three cases (see Fig.~\ref{fig:cond cnf results} left). On the other hand, in $\mathcal{G}_6$, we measure conditional confounding between $X_i, X_j$ conditioning on empty set and $Z$. Since conditioning on $Z$ will block the confounding association between $X_i, X_j$, $CNF\mhyphen2$ returns confounding value closer to zero. However, the unconditioned confounding (conditioning on empty set) value is still large. These results empirically validate the correctness of the proposed measures.

\begin{wraptable}[7]{r}{0.7\textwidth}
    \centering
    \scalebox{0.75}{
    \begin{tabular}{c|ccccc|ccccc}
    \toprule
    Causal& \multicolumn{5}{c|}{Not Controlling Confounding}&\multicolumn{5}{c}{Controlling Confounding}\\
Graph&1000&2000&3000&4000&5000&1000&2000&3000&4000&5000\\
    \midrule
         $\mathcal{G}_3$ &0.55&0.57&0.55&0.52&0.52&\cellcolor{lightmintbg}0.06&\cellcolor{lightmintbg}0.02&\cellcolor{lightmintbg}0.007&\cellcolor{lightmintbg}0.03&\cellcolor{lightmintbg}0.009  \\
        $\mathcal{G}_4$ & 0.24&0.26&0.23&0.24&0.23&\cellcolor{lightmintbg}0.04&\cellcolor{lightmintbg}0.05&\cellcolor{lightmintbg}0.06&\cellcolor{lightmintbg}0.02&\cellcolor{lightmintbg}0.05\\
    \bottomrule
    \end{tabular}
    }
    \vspace{-5pt}
    \caption{\footnotesize Downstream application of causal effect estimation.}
    \label{causal effects}
\end{wraptable}
\noindent \textbf{Downstream Causal Effect Estimation:} For the causal graphs $\mathcal{G}_3, \mathcal{G}_4$, we examine the impact of controlling for nodes identified using our method. We measure the causal effect of $X_i$ on $X_j$ with and without controlling for the detected confounding variable and report the absolute difference between the true and estimated causal effects in Tab.~\ref{causal effects}. The results show that controlling for the variables identified by our method reduces the bias in the estimated causal effects.

\noindent \textbf{Binary Data - Erd\"{o}s-R\'enyi Causal Graphs:} To verify the performance of our method on a large scale, similar to~\citep{sarahdetecting}, we generate causal graphs of various number nodes using Erd\"{o}s-R\'enyi model. In these experiments, each context is a result of intervention on one node. This is the reason for having the same value for number of nodes $N$ and number of contexts $|C|$. Sample size denotes the number of data points used in each context. We detect and measure whether each pair of nodes is confounded or not. We then calculate the \textit{Precision, Recall}, and \textit{F1} scores. Our confounding measures obtain good results across all settings.
\begin{table}[H]
\centering
\scalebox{0.8}{
    \begin{tabular}{cc|ccc|ccc|ccc}
\toprule
&&\multicolumn{3}{c|}{\textbf{Setting 1}}&\multicolumn{3}{c|}{\textbf{Setting 2}}&\multicolumn{3}{c}{\textbf{Setting 3}}\\
\midrule
    $N$, $|C|$ & \textbf{Sample Size} & \textbf{Precision} & \textbf{Recall} & \textbf{F1}& \textbf{Precision} & \textbf{Recall} & \textbf{F1}& \textbf{Precision} & \textbf{Recall} & \textbf{F1} \\
        \midrule
        10 & 100 & 0.64 & 0.97 & 0.77& 0.67 & 0.83 & 0.74 & 0.64 & 0.72 & 0.68\\
        10 & 200 & 0.64 & 1.0 & 0.78 & 0.67 & 0.83 & 0.74& 0.70 & 0.79 & 0.74\\
        10 & 300 & 0.64 & 1.0 & 0.78 & 0.67 & 0.83 & 0.74& 0.65 & 0.76 & 0.70\\
        10 & 400 & 0.64 & 1.0 & 0.78 & 0.67 & 0.83 & 0.74& 0.67 & 0.83 & 0.74\\
        10 & 500 & 0.64 & 1.0 & 0.78 & 0.67 & 0.83 & 0.74& 0.67 & 0.83 & 0.74\\
        \midrule
        15 & 100 & 0.81 & 0.95 & 0.88& 0.80 & 0.85 & 0.82 & 0.80 & 0.79 & 0.80\\
        15 & 200 & 0.82 & 1.0 & 0.90& 0.80 & 0.85 & 0.82& 0.80 & 0.85 & 0.82 \\
        15 & 300 & 0.82 & 1.0 & 0.90& 0.80 & 0.85 & 0.82 & 0.80 & 0.85 & 0.82\\
        15 & 400 & 0.82 & 1.0 & 0.90& 0.80 & 0.85 & 0.82 & 0.80 & 0.85 & 0.82\\
        15 & 500 & 0.82 & 1.0 & 0.90 & 0.80 & 0.85 & 0.82 & 0.80 & 0.84 & 0.82 \\
        \midrule
        20 & 100 & 0.68 & 0.95 & 0.80& 0.68 & 0.88 & 0.77& 0.69 & 0.84 & 0.76 \\
        20 & 200 & 0.69 & 1.0 & 0.82& 0.68 & 0.88 & 0.77& 0.68 & 0.87 & 0.76 \\
        20 & 300 & 0.69 & 1.0 & 0.82 & 0.68 & 0.88 & 0.77 & 0.67 & 0.86 & 0.75\\
        20 & 400 & 0.69 & 1.0 & 0.82 & 0.68 & 0.88 & 0.77& 0.68 & 0.87 & 0.76\\
        20 & 500 & 0.69 & 1.0 & 0.82 & 0.68 & 0.88 & 0.77& 0.68 & 0.87 & 0.76\\
        \midrule
        25 & 100 & 0.83 & 0.96 & 0.89 & 0.83 & 0.91 & 0.87& 0.83 & 0.89 & 0.86\\
        25 & 200 & 0.83 & 1.0 & 0.91& 0.83 & 0.91 & 0.87 & 0.82 & 0.90 & 0.86\\
        25 & 300 & 0.83 & 1.0 & 0.91 & 0.83 & 0.91 & 0.87& 0.83 & 0.91 & 0.87 \\
        25 & 400 & 0.83 & 1.0 & 0.91& 0.83 & 0.92 & 0.87 & 0.83 & 0.91 & 0.87 \\
        25 & 500 & 0.83 & 1.0 & 0.91 & 0.83 & 0.91 & 0.87 & 0.83 & 0.91 & 0.87\\
        \bottomrule
    \end{tabular}
        }
        \caption{Results on synthetic datasets for settings 1,2,3.}
\end{table}

\vspace{-25pt}
\section{Conclusions, Limitations, and Future Work}
\label{sec limitations future work and conclusions}
\vspace{-5pt}
In this paper, based on the known causal mechanism shifts of observed variables, we propose three measures of confounding along with their conditional and multivariate variants. We also study key properties of these measures. Our measures complement each other depending on the available context information. We propose algorithms to compute the proposed measures and empirically verify their correctness. However, for the same confounded pair of variables, our metrics may yield different results depending on the chosen measure. As discussed in the introduction, the measures are intended to assess the relative strengths of confounding rather than for point-to-point comparison. The number of contexts required to evaluate the measure can be large because many contexts without changes in particular mechanisms are discarded.  Identifying appropriate real-world datasets and applying the proposed measures to those datasets is an interesting area for future work, as is developing measures that efficiently use context information. Additionally, devising new definitions for confounding and proposing corresponding confounding measures is also an interesting future direction. We aim to pursue these ideas.
%%%%%%%%%%%%%%%%%%%%%%%%%%%%%%%%%%%%%%%%%%%%%%%%%%%%%%%%%%%%

%%%%%%%%%%%%%%%%%%%%%%%%%%%%%%%%%%%%%%%%%%%%%%%%%%%%%%%%%%%%
\section*{Acknowledgments}
This work was partly supported by the Prime Minister’s Research Fellowship (PMRF) program and an Adobe Research Gift. We are grateful to the anonymous reviewers for their valuable feedback, which improved the presentation of the paper.
\bibliography{bibliography}
\bibliographystyle{plainnat}

\newpage
\appendix

\section*{Appendix}

\section{Proofs}
\label{appx sec proofs}

\propone*

\begin{proof}
    Since the set of contexts $\mathbf{C}_{\{i\}\wedge P_{ij}}$ consist of data with all possible interventions on $X_i$, if a context $c$ is generated by performing intervention on $X_i$ with the value $x_i$, the expression $\mathbb{P}(X_j|do(X_i=x_i))$ is equal to the expression $\mathbb{P}(X_j|X_i=x_i)$ in that context $c$.

    From Defn.~\ref{def confounding1}, to detect and measure confounding between the pair of variables $X_i, X_j$, we need to evaluate $\mathbb{P}(X_j|do(X_i))$ and $\mathbb{P}(X_i|do(X_j))$. To this end, from the previous paragraph, we need two sets of contexts $\mathbf{C}_{\{i\}\wedge P_{ij}}$ and $\mathbf{C}_{\{j\}\wedge P_{ji}}$. Following these observations, it is enough to have $n$ sets of contexts to detect and measure confounding between $\binom{n}{2}$ distinct pairs of nodes.
\end{proof}

\theoremone*

\begin{proof}
    Consider three variables $X_i, X_j, X_k$ in the underlying causal graph. Consider the conditional directed information between $X_i, X_j$ given $X_k$ and the subsequent manipulations as follows.
    \begin{align*}
        I(X_i\rightarrow X_j|X_k)&\coloneqq \mathbb{E}_{\mathbb{P}(X_i,X_j,X_k)}\log \frac{\mathbb{P}(X_i|X_j,X_k)}{\mathbb{P}(X_i|do(X_j),X_k)}\\
        &=\mathbb{E}_{\mathbb{P}(X_i,X_j,X_k)}\log \left (\frac{\mathbb{P}(X_i,X_k|X_j)}{\mathbb{P}(X_i,X_k|do(X_j))} \times \frac{\mathbb{P}(X_k|do(X_j))}{\mathbb{P}(X_k|X_j)}\right)\\
        &=\mathbb{E}_{\mathbb{P}(X_i,X_j,X_k)}\log \frac{\mathbb{P}(X_i,X_k|X_j)}{\mathbb{P}(X_i,X_k|do(X_j))} - \mathbb{E}_{\mathbb{P}(X_j,X_k)}\log \frac{\mathbb{P}(X_k|X_j)}{\mathbb{P}(X_k|do(X_j))}\\
        &=I(\{X_iX_k\}\rightarrow X_j) - I(X_k\rightarrow X_j)\\
    \end{align*}
Since $I(X_k\rightarrow X_j)\geq 0,$ we have $ I(X_i\rightarrow X_j|X_k) \leq I(\{X_iX_k\}\rightarrow X_j)$. Equality holds only when $X_k, X_j$ are unconfounded.
\end{proof}

\theoremtwo*
\begin{proof}

\textbf{Reflexivity:} From the definition of directed information, $I(X_i\rightarrow X_i|X_o)=\mathbb{E}_{\mathbb{P}(X_i, X_j,X_o)}log\frac{\mathbb{P}(X_i|X_o)}{\mathbb{P}(X_i|X_o)}=0$ and hence $CNF\mhyphen1(X_i, X_j|X_o)=1-e^0=0$. 

\textbf{Symmetry:} Even if $I(X_i\rightarrow X_j|X_o)$ is not symmetric, the expression `$\min(I(X_i\rightarrow X_j|X_o), I(X_j\rightarrow X_i|X_o))$' is symmetric and hence $CNF\mhyphen1(X_i, X_j|X_o)$ is symmetric. 

\textbf{Positivity:} If $X_i, X_j$ are confounded, irrespective of the direction of the causal path between $X_i$ and $X_j$, we have $\mathbb{P}(X_i|X_j)\neq \mathbb{P}(X_i|do(X_j))$ and $\mathbb{P}(X_j|X_i)\neq \mathbb{P}(X_j|do(X_i))$. Hence $I(X_i\rightarrow X_j)>0$ and $I(X_j\rightarrow X_i)>0$. We now have $CNF\mhyphen1(X_i,X_j)>0$. The above statement is true even if there is no causal path between the nodes $X_i, X_j$. The above statements are valid even after conditioning on an observed confounding variable $X_o$ if there is an unobserved confounding between $X_i, X_j$.

\textbf{Monotonicity:} Without loss of generality, assume that the inequality $CNF\mhyphen1(X_i, X_j)>CNF\mhyphen1(X_k, X_l)$ is a result of $I(X_i\rightarrow X_j) > I(X_k\rightarrow X_l)$. That is, the KL divergence between $\mathbb{P}(X_i|X_j)$ and $\mathbb{P}(X_i|do(X_j))$ is greater than the kl divergence between $\mathbb{P}(X_k|X_l)$ and $\mathbb{P}(X_k|do(X_l))$. That is, the pair of distributions $\mathbb{P}(X_k|X_l)$ and $\mathbb{P}(X_k|do(X_l))$ are closer to each other compared to the pair $\mathbb{P}(X_i|X_j)$ and $\mathbb{P}(X_i|do(X_j))$. As a result, $X_k, X_l$ are closer to being \textit{not confounded} in the sense of Defns.~\ref{defn noconfounding} and~\ref{def noncollapsibility}.
\end{proof}
\proptwo*

\begin{proof}
There are two sources of dependency between $E^C_i, E^C_j$. If $X_i, X_j$ are causally related in the underlying causal model generating the data, there will be a dependency between $E^C_i, E^C_j$ in the context $C_{\{i\}\wedge\{j\}}$ as the interventions are soft. On the other hand, as per the Assumption~\ref{asm shiftfaithfulness}, any shift in the causal mechanism of $Z$ leads to a change in both the mechanisms of $X_i, X_j$ leading to a dependency. Hence the random variables $E^C_i, E^C_j$ have non-zero mutual information.
\end{proof}

\theoremthree*
\begin{proof}
    Following the Assumption~\ref{cnfminimality}, when three variables $X_i, X_j, X_k$ are confounded by as single confounding variable $Z$, conditioning on one of $E^C_i,E^C_j,E^C_k$ explains away some of the dependency between other two. Hence we have $I(E_i^C;E_j^C|E^C_k)<I(E^C_i;E^C_j)$ for all triples $i,j,k$.
\end{proof}

\theoremfour*
\begin{proof}
    \textbf{Reflexivity:} from the definition of mutual information, $I(E^C_i;E^C_i|X_o) = H(E^C_i|X_o)-H(E^C_i|E^C_i,X_o)=H(E^C_i|X_o)$. Substituting in the definition of $CNF\mhyphen2(X_i,X_j)$, result follows. 
    
    \textbf{Symmetry:} The result follows from the `symmetry' property of mutual information.

    \textbf{Positivity:} If $X_i, X_j$ are confounded, from the Assumption~\ref{asm shiftfaithfulness}, $E^C_i, E^C_j$ are dependent random variables. Hence the mutual information is positive. The result follows after substituting some positive value for $I(E^C_i;E^C_j)$ in the definition of $CNF\mhyphen2(X_i,X_j)$. The same argument goes for conditional confounding.

    \textbf{Monotonicity:} from the definition of $CNF\mhyphen2(X_i,X_j)$, $CNF\mhyphen2(X_i, X_j) > CNF\mhyphen2(X_k, X_l)$ implies $I(E^C_i;E^C_j)>I(E^C_k;E^C_l)$. From the Defn.~\ref{def micnf}, $X_i,X_j$ have higher mutual information than the pair $X_k, X_l$ and hence $X_i, X_j$ are more strongly confounded than $X_k,X_l$.
\end{proof}

\theoremfive*
\begin{proof}
    
        Following the Assumption~\ref{cnfminimality}, when three variables $X_i, X_j, X_k$ are confounded by as single confounding variable $Z$, conditioning on $E^C_k$ explains away some of the dependency between $E_{ij}^C,E_{jk}^C$. Hence we have $I(E_{ij}^C;E_{jk}^C|E^C_j)<I(E^C_{ij};E^C_{jk})$ for all triples $i,j,k$.
\end{proof}

\theoremsix*
\begin{proof}
    \textbf{Reflexivity:} from the definition of mutual information, $I(E^C_{ii};E^C_i|X_o) = I(E^C_i;E^C_i|X_o) = H(E^C_i|X_o)-H(E^C_i|E^C_i,X_o)=H(E^C_i|X_o)$. Substituting in the definition of $CNF\mhyphen3(X_i,X_j)$, result follows. 
    
    \textbf{Symmetry:} Since we rely on the direction of the causal path between $X_i, X_j$, for a given pair of nodes $X_i,X_j$, we have $CNF\mhyphen3(X_i, X_j) = CNF\mhyphen3(X_j, X_i)$ from Defn.~\ref{def confounding3}.
    
    \textbf{Positivity:} If $X_i, X_j$ are confounded and $X_i\rightarrow X_j$, from the Assumption~\ref{asm shiftfaithfulness}, $E^C_{ji}, E^C_j$ are dependent random variables. Hence the mutual information $E^C_{ji}, E^C_j$ is positive. The result follows after substituting positive value for $I(E^C_{ji};E^C_j)$ in the definition of $CNF\mhyphen3(X_i,X_j)$. The same argument goes for conditional confounding.

    \textbf{Monotonicity:} from the definition of $CNF\mhyphen3(X_i,X_j)$, without loss of generality, $CNF\mhyphen3(X_i, X_j) > CNF\mhyphen3(X_k, X_l)$ implies $I(E^C_{ji};E^C_j)>I(E^C_{lk};E^C_l)$. From the Defn.~\ref{def micnf}, $X_i,X_j$ have higher mutual information and hence are more strongly confounded than $X_k,X_l$.
\end{proof}

\section{Real-world Examples}
\label{real world examples}
\begin{figure}[H]
    \centering
    \includegraphics[width=0.5\linewidth]{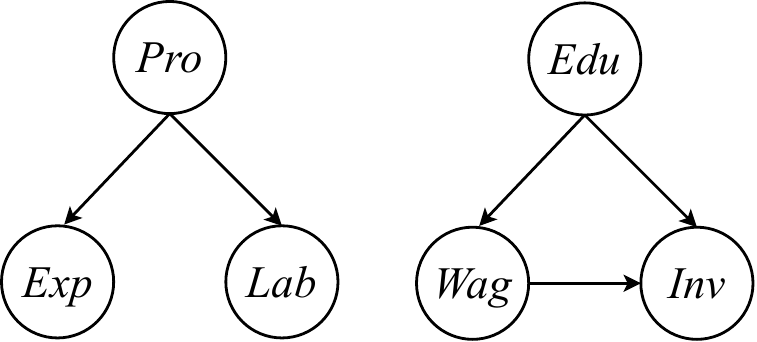}
    \caption{Two real-world examples where our method can be applied. Here \textit{Pro}: Production Volume, \textit{Exp}: Exports, \textit{Lab}: Total Labor Required, \textit{Edu}: Education, \textit{Wag}: Wages, \textit{Inv}: Investments. We can perform interventions on the above variables and any combination thereof to obtain context-specific data. We can use such data to identify and measure confounding by applying our methods.}
\end{figure}

\end{document}